\numberwithin{equation}{section}
\theoremstyle{remark}
\numberwithin{equation}{section}
\theoremstyle{plain}% default
\newtheorem{theorem}{Theorem}[section]
\theoremstyle{definition}
\newtheorem{definition}{Definition}[section]
\newtheorem{assumption}{Assumption}[section]
\begin{document}

% If your paper is accepted and the title of your paper is very long,
% the style will print as headings an error message. Use the following
% command to supply a shorter title of your paper so that it can be
% used as headings.
%
%\runningtitle{I use this title instead because the last one was very long}

% If your paper is accepted and the number of authors is large, the
% style will print as headings an error message. Use the following
% command to supply a shorter version of the authors names so that
% they can be used as headings (for example, use only the surnames)
%
%\runningauthor{Surname 1, Surname 2, Surname 3, ...., Surname n}

\twocolumn[

\aistatstitle{Implementable confidence sets in high dimensional regression}

\aistatsauthor{ Alexandra Carpentier}

\aistatsaddress{Statistical Laboratory, Center for Mathematical Sciences - University of Cambridge } ]

\begin{abstract}
  We consider the setting of linear regression in high dimension. We focus on the problem of constructing \textit{adaptive and honest confidence sets} for the sparse parameter $\theta$, i.e.~we want to construct a confidence set for $\theta$ that contains $\theta$ with high probability, and that is as small as possible. The $l_2$ diameter of a such confidence set should depend on the sparsity $S$ of $\theta$ - the larger $S$, the wider the confidence set. However, in practice, $S$ is unknown. This paper focuses on constructing a confidence set for $\theta$ which contains $\theta$ with high probability, whose diameter is adaptive to the unknown sparsity $S$, and which is implementable in practice.
\end{abstract}

\section{Introduction}

We consider the regression model in dimension $p$, with $n$ observations,
\begin{equation}\label{eq:model}
Y = X \theta + \epsilon,
\end{equation}
where $Y$ is the $n$-dimensional observation vector, $\theta$ is the $p$ dimensional unknown parameter, $X$ is the $n \times p$ design matrix, and $\epsilon$ is the $n$ dimensional white noise (see Section~\ref{sec:setting} for a complete presentation of the setting). We focus on the high dimensional setting where $n \ll p$.% We assume that the entries in $\epsilon$ are independent, uncorrelated and sub-Gaussian .

Models such as the one described in Equation~\eqref{eq:model} have received very much attention recently. In particular, finding good estimates of $\theta$ when $p$ is very large has many important applications (see~\cite{starck2010sparse,NIPS2011_0141,kavukcuoglu2009learning}). Solving this problem in a satisfying way is nevertheless impossible in general, since it is ill-posed. For this reason, and also because it is an assumption that holds in many concrete cases, it is usual in this setting to focus on the case where $\theta$ is a sparse parameter. Let $l_0(S)$ be the $l_0$ ``ball" of radius $S$, i.e.~the set of vectors that have less than $S$ non-zero coordinates (that are $S$-sparse). It has been proved that in some specific cases, namely when the design matrix $X$ satisfies some desirable conditions for $\bar p$ sparse vectors (e.g.~null space property, R.I.P., restricted eigenvalue condition, etc, see e.g.~\cite{candes2006robust, koltchinskii2009dantzig, donoho1989uncertainty, candes2008restricted, foucart2009sparsest, bickel2009simultaneous}), it is possible to construct an estimate $\hat \theta(X,Y)$ of the parameter $\theta$ such that
\begin{equation}\label{eq:adaptinf}
\sup_{S: S \leq \bar p} \sup_{\theta \in  l_0(S)} \mathbb P_{\theta}\Big( \|\hat \theta - \theta\|_2^2 \geq E\frac{S \log(p/\delta)}{n}\Big)<\delta,
\end{equation}
where $E>0$ is a constant, where for any vector $u$, $\|u\|_2^2 = \sum_j u_j^2$ is the usual $l_2$ norm, and where $\mathbb P_{\theta}$ is the probability according to the noise $\epsilon$ when the true parameter is $\theta$. This result is minimax-optimal over $S-$sparse vectors for any $S \leq \bar p$, see e.g.~\cite{raskutti2011minimax}. Moreover, this bound on the accuracy of $\hat \theta$ scales with the true sparsity $S$ of $\theta$ that is not available to the learner : the estimate $\hat\theta$ is adaptive. Some key references for the existence of such an adaptive estimate are~\cite{zou2006adaptive,candes2007dantzig, bickel2009simultaneous, buhlmann2011statistics}. Although this problem is a difficult combinatorial problem, there exist some computationally feasible techniques, under stronger assumptions on the design $X$, for instance the thresholding procedures, the orthogonal matching pursuit, the Lasso, the Dantzig Selector etc. For more references on the techniques and the associated bounds and design assumptions, see e.g.~\cite{donoho1989uncertainty, Tibshirani94regressionshrinkage, donoho2006compressed, candes2006robust, lee2013exact}.% The important problems one can cast in this model are however not limited to estimation.

Another important problem is the one of confidence statements for the parameter $\theta$, i.e.~of quantifying the precision of an estimate of $\theta$. Constructing confidence sets in high dimensional regression was studied e.g.~in the papers~\cite{abbasi2012,javanmard2013confidence,beran1998modulation, baraud2004confidence, nickl2012confidence} and it is, with the problem of estimating $\theta$, the second fundamental problem of inference in this setting. The objective is to construct a set $C_n$ that contains $\theta$ with high probability, and also that is as small as possible, i.e.~that is such that its $l_2$ diameter is as small as possible. One can deduce from the lower and upper bounds for the estimation problem (see~\cite{raskutti2011minimax, nickl2012confidence, javanmard2013confidence}), that the optimal $l_2$-width of a confidence interval for the sparse vector $\theta$ should depend on its sparsity $S$ - it should be of order $\sqrt{S\log(p)/n}$. For $\delta>0$, if $\theta$ is $S$ sparse and \textit{$S$ is known}, and if $\hat \theta$ is an estimator of $\theta$ that satisfies Equation~\eqref{eq:adaptinf}, a $l_2$-confidence interval $C_n$ of coverage $1-\delta$ should ideally be of the form
\begin{equation*}
C_n = \{u \in \mathbb R^p: \|u - \hat \theta\|_2 \leq \sqrt{E \frac{S \log(p/\delta)}{n}}\}.
\end{equation*}
On the other hand, if the sparsity $S$ of the parameter $\theta$ is unknown, which is the case in real applications since $\theta$ is unknown, one cannot construct directly this optimally sized confidence interval. %It is then

For the problem of estimating $\theta$, \textit{computationally feasible} techniques that are \textit{adaptive to the unknown sparsity $S$} of $\theta$ exist (see Equation~\eqref{eq:adaptinf} and associated references). Do similar results hold for the problem of constructing a confidence set for $\theta$? In particular, can one construct, in a non computationally extensive way, a confidence set for the adaptive estimate of Equation~\eqref{eq:adaptinf} that contains $\theta$ with high probability and whose diameter is adaptive to the unknown sparsity $S$ of $\theta$? This is the problem that this paper targets. %This paper is concerned with the problem of constructing an \textit{adaptive and honest confidence set for $\theta$ when $S$ is unknown}, i.e.~of constructing a confidence set that is as small as possible adaptively to the unknown sparsity $S$ of $\theta$ (i.e.~that is of diameter $\sqrt{\frac{S\log(p)}{n}}$), and that contains $\theta$ with high probability. We moreover focus on constructing a procedure that is \textit{implementable in practice}.% In a first instance, we 
%unless one estimates $S$, and testing is a first step toward estimation, see~\cite{nickl2012confidence}. The problem of really estimating $S$ will be also discussed in this paper in Subsection~\ref{ss:estS}. 
We would like to insist on the importance of the construction of confidence sets. Indeed, most of the sequential learning algorithms rely on such confidence sets. For instance, in the papers~\cite{abbasi2012,carpentier2012,desphande2012} that are on the topic of linear bandit in high dimensions, the authors assume that an upper bound $\bar S$ on the sparsity $S$ is known, and they consider a large confidence interval for $\theta$ whose diameter depends on $\bar S$. The final bounds on the regret of their bandit algorithms then depend on the chosen upper bound $\bar S$, and not on the correct sparsity $S$ of $\theta$. In this setting, it would be quite useful to have an adaptive and honest confidence set for $\theta$, that adapts to the sparsity $S$. The bound on the regret would then depend on $S$ and not on $\bar S$.

%In Section~\ref{lit:rev}, we present a literature review of the literature that studies the problem of constructing an adaptive and honest confidence interval for $\theta$ when $S$ is unknown.

\section{Literature review}\label{lit:rev}

The problem of constructing a confidence set, when the support of the parameter or its sparsity $S$ is known, is a problem that has received attention recently, see~e.g.\cite{abbasi2012, javanmard2013confidence, javanmard2014confidence, beran1998modulation, baraud2004confidence, lee2014, VandeGeer}. The papers~\cite{javanmard2013confidence, javanmard2014confidence, lee2014, VandeGeer} are concerned with finding the limiting distribution of an estimate of $\theta$, and using it to build tests and confidence sets for fixed, low dimensional sub models (fixed subsets of coordinates of $\theta$). This approach does not provide an optimally sized confidence set for the parameter $\theta$ itself, since its support (i.e.~the correct low dimensional model of interest) is not known.
%The [L-T] paper considers also the problem of constructing a confidence set for an empirically selected subset of coordinates ; but there is no guarantee that this selected subset coincides with the actual support of \theta and therefore no guarantees thatthe confidence set contains \theta.
On the other hand, the problem of constructing an adaptive and honest confidence interval for $\theta$ when $S$ is unknown, has only recently started to receive attention. It is an important problem, since there is no reason why the low dimensional support of the parameter has to be known before hand : therefore, the low dimensional model of interest cannot be chosen efficiently in a non data driven way. This problem is related to the problem of estimating the sparsity $S$ of the parameter $\theta$, as explained in various related settings in the papers~\cite{hoffman2002random, juditsky2003nonparametric, gine2010confidence, nickl2012confidence}. Indeed, if a good estimate $\hat S$ of $S$ is available, then one could consider the confidence interval
\begin{equation*}
C_n = \{u \in \mathbb R^p: \|u - \hat \theta\|_2^2 \leq E \frac{\hat S \log(p/\delta)}{n}\}.
\end{equation*}

Let us first consider a simpler instance of this problem that will enlighten its difficulties. It is the case where one wants to be adaptive to two sparsities $S_0<S_1$ (and not to any sparsity $S$). The objective is to construct a confidence set $C_n$ that is \textit{adaptive and honest}, i.e.~that contains $\theta$ with high probability, and whose diameter is of order $\sqrt{S_0 \log(p)/n}$ if $\theta$ is of sparsity $S_0$ or below, or of order $\sqrt{S_1 \log(p)/n}$ if the sparsity of $\theta$ is between $S_0$ and $S_1$. In other words, the objective is to construct a set $C_n$ based on the data such that for $\mathcal P= l_0(S_1)$, for $\mathcal I=\{S_0, S_1\}$, and for $\delta>0$,
\begin{align}
&\max_{S \in \mathcal I} \sup_{\theta \in l_0(S)\bigcap \mathcal P}\mathbb P_{\theta}(\theta \in C_n) \geq 1-\delta, \hspace{0.2cm} \mathrm{and} \hspace{0.2cm} \nonumber\\
&\max_{S \in \mathcal I} \sup_{\theta \in  l_0(S) \bigcap \mathcal P} \mathbb P_{\theta}\Big(|C_n|_2 \geq \sqrt{E' \frac{S \log(p/\delta)}{n}}\Big) \leq \delta,\label{eq:csetint2}
\end{align}
where $E'>0$ is a constant and where $|C_n|_2 = \sup\{\|u - v\|_2,  (u,v)\in C_n^2\}$ is the diameter of $C_n$. %This two sparsity indexes result can then be generalized, and a confidence set that is adaptive to more sparsities can then be deduced.
There is however a serious obstruction to the creation of a such confidence set. It is possible to prove (see e.g.~\cite{baraud2004confidence, nickl2012confidence}) that in many situations, there exists no adaptive and honest confidence sets on the entire parameter space $\mathcal P = l_0(S_1)$. The problem is that there are some parameters that are not $S_0$ sparse, but that are very close to $S_0$ sparse vectors, and for which it is impossible to detect that one needs a confidence set of diameter $\sqrt{E'S_1 \log(p/\delta)/n}$ (since a confidence set of diameter $\sqrt{E'S_0 \log(p/\delta)/n}$ won't provide enough coverage). A reasonable and important question is then to provide a confidence set that is adaptive and honest on the largest possible model $\mathcal P \subset l_0(S_1)$. Intuitively, this model $\mathcal P$ should be the set $l_0(S_1)$ where the problematic parameters that are not $S_0$ sparse, but are very close to $l_0(S_0)$ have been removed.

There has been recently some very important advances on this problem 
%\footnote{In the paper~\cite{nickl2012confidence}, the authors consider a slightly different problem, which is the problem of building adaptive confidence sets. As they explain in their paper, and as we will see later, the testing problem and the adaptive confidence set problem are very related. We rephrase their results in terms of our testing problem.}
in the paper~\cite{nickl2012confidence}. They define the separated set $\tilde {l_0} (S_1, \rho)$ for a constant $\rho >0$ as
\begin{equation}\label{eq:sepsets}
\tilde {l_0} (S_1, \rho) = \{u \in l_0(S_1): \|u - l_0(S_0)\|_2 \geq \rho\},
\end{equation}
where, if $A \subset \mathbb R^p$, we write for $u \in \mathbb R^p$, $\|u - A\|_2 = \inf_{v \in A} \|u-v\|_2$. They then define
\begin{equation*}%\label{eq:mod}
\mathcal P: = \mathcal P(\rho):= l_0(S_0) \bigcup \tilde {l_0} (S_1, \rho).
\end{equation*}
This new model excludes vectors that are not $S_0$ sparse, but at a distance that is less than $\rho$ from $S_0$ sparse vectors. The smaller $\rho$, the more similar $\mathcal P$ is to $l_0(S_1)$ (equality holds when $\rho=0$). The restriction to the model $\mathcal P$ can be seen as a margin condition : the $\rho-$margin condition is satisfied if the true parameter $\theta$ belongs to a sub-model where the two classes of sparsity are $\rho-$away of each other, i.e.~if $\theta$ belongs either to $l_0(S_0)$, or to $\tilde {l_0} (S_1, \rho)$. This margin condition is necessary for being able to distinguish between the two sets of sparsity $S_0$ and $S_1$.%Constructing adaptive and honest confidence sets in $\mathcal P_n(\rho)$ can be seen as the constructing confidence interval  with a kind of margin condition. In classification terms, we have two classes ($l_0(S_0)$ and $l_0(S_1)$), and we want to build an efficient classifier (test) between these. The $\rho-$margin condition is satisfied if the true parameter $\theta$ belongs to a sub-model where the two classes are $\rho-$away of each other, i.e.~if $\theta$ belongs either to $l_0(S_0)$, or to $\tilde {l_0} (S_1, \rho)$. Similar to what holds in classification, margin conditions are important in testing as we will see.% in order to ensure the existence of an \textit{uniformly consistent test}.

The objective is then to characterize the smallest possible $\rho$ for which a such confidence set exists, and then to construct this confidence set. %This problem is actually far from being trivial and naive solutions such as counting the number of non-zero entries of an adaptive estimate $\hat \theta$, and then plugging it in the diameter of the confidence set, are not optimal.
Table~\ref{fig:1} summarizes the minimax-optimal order of $\rho:=\rho_n$ (with lower and upper bounds) such that an adaptive and honest confidence set for $\theta$ exists in $\mathcal P(\rho_n)$, in function of $S_0,S_1$ (see~\cite{nickl2012confidence}). All upper bounds and lower bounds match in the three cases summarized in the table, and $\lim_{n\rightarrow \infty}\rho_n = 0$ at a rate which depends on $S_0,S_1$, which implies that $\mathcal P(\rho_n)$ converges to $l_0(S_1)$. % The fact that $\rho_n \rightarrow_{n \rightarrow \infty} 0$ implies that for any $\theta \in l_0(S_1)$, for $n$ large enough, the confidence set will be honest and adaptive for $\theta$. %and the table summarizes at which rate it can go to $0$. 
%, and they study the \textit{rate} at which $\rho_n$ goes to $0$. 
%There is a gap between the upper and lower bound in terms of $\rho_n$ (in case (iii)), but these results are very interesting.
The minimax-optimal rates at which $\rho_n$ can go to $0$ are known for this problem, but an important issue that remain on the existence of \textit{computationally feasible} adaptive and honest confidence intervals in cases (ii) and (iii) of Table~\ref{fig:1}. Indeed, the procedure in~\cite{nickl2012confidence} consists on computing a quantity of the form $\inf_{u \in l_0(S_0)} |t_n(u)|$ where $t_n(u)$ is some quadratic functional of the data and $u$, then testing if this quantity exceeds a threshold, and finally using the output of this test for constructing the confidence interval. Computing this statistic is a computationally extensive problem, since its computational complexity is of order $p^{S_0}n$. It is thus not proper for concrete applications. Also, they assume that their design matrix is a random Gaussian matrix, which is quite restrictive. Another aspect that moreover prevents the use of this method in practical applications is the fact that sparsity is unlikely to hold exactly in practice, and that results involving approximate sparsity are more appealing.% A last remark is that the testing procedures in the paper~\cite{nickl2012confidence} differ depending on the cases: there is a (computationally feasible) procedure for case (i), and a (computationally infeasible) procedure for case (ii) and (iii). It would be nice to have a single unified procedure for all these cases.

 %Since the authors use this test for constructing honest and adaptive confidence sets, we also provide, for confidence sets, what is the size $\rho_n$ of the region one has to remove in order for honest and adaptive confidence sets to exist.

\begin{table*}[t]
\begin{center}
\begin{tabular}{|c||ccc|}
  \hline
& \textbf{Case (i)} & \textbf{Case (ii)} & \textbf{Case (iii)}\\
\textbf{Values of $S_0$ and $S_1$} & $S_0 = o(n^{1/2}/\log(p))$ & $S_0 = o(n^{1/2}/\log(p))$ & $S_0 = o(n/\log(p))$\\
&   $S_1 =o(n^{1/2}/\log(p))$ &  $S_1 =o(n/\log(p))$ &   $S_1 =o(n/\log(p))$ \\
\hline
\hline
  UB on $\rho$ & $\sqrt{\frac{S_1 \log(p)}{n}}$  & $n^{1/4}$ &  $0$\\
  \hline
  LB on $\rho$ & $\sqrt{\frac{S_1 \log(p)}{n}}$  & $n^{1/4}$ & $0$\\
  \hline
%  UB on $\rho_n^2$ for CS & $\frac{S_1 \log(p)}{n}$  & $n^{1/2}$ & $0$\\
%\hline
%  LB on $\rho_n^2$ for CS & $\frac{S_1 \log(p)}{n}$  & $n^{1/2}$ & $0$\\
%\hline
  Computational complexity & $np$ & $np^{S_0}$ & $np^{S_0}$ \\
  \hline
\end{tabular}
%\caption{Characteristics of the distributions for different values of the strike $K$ and performance of the different strategies in terms of regret for a budget $n=100000$.}
\end{center}
\caption{Upper and lower bounds on the parameter $\rho^2$ for the problem of constructing an honest and adaptive confidence set. UP stands for Upper Bound, LB stands for Lower Bound.}\label{fig:1}
\end{table*}

%, and CS stands for Confidence set. In the case of confidence sets, $\rho_n$ represents the diameter

In this paper, we provide a computationally feasible method for constructing an honest and adaptive confidence set on a maximal model $\mathcal P$ (in the minimax-optimal sense of Table~\ref{fig:1}), in a more general setting, i.e.~in the setting of approximate sparsity for the parameter $\theta$, and under more general assumptions for the design matrix $X$, namely that its condition number is not too large for $S_1$ sparse vectors. The confidence set we propose is actually trivial to implement and its computational complexity is of order $O(pn)$ (provided that an adaptive estimate satisfying Equation~\ref{eq:adaptinf} is available). We first provide a method in the case of two sparsity indexes (that achieves adaptivity to two sparsities $S_0,S_1$), and we then extend it to the more general setting of many sparsity indexes $\mathcal I$. In particular, the method we propose applies to constructing a confidence set that is adaptive and honest for all the sparsities smaller than a large sparsity index $\bar p$. It is minimax optimal also in this setting, adaptive, and implementable in practice. We test our method with numericals simulations and all proofs are in the supplementary material.
%In particular it can be used to solve the fundamental problem of building adaptive and honest confidence sets around sparse vectors, as mentioned above. %Although we provide these results under some restrictive assumptions on the design matrix $X$ for the sake of simplicity, we believe that transposing them to more general settings (e.g.~$X$ RIP) is not difficult.
%Finally, we provide numerical applications, on simulated data, and also on images.

\section{Setting}\label{sec:setting}

Let $n$ and $p$ be two integers with $n \ll p$. Consider the model defined in Equation~\eqref{eq:model}.

\subsection{Assumption on the noise $\epsilon$}

We state the following assumption about the noise $\epsilon$, namely that its entries are independent and sub-Gaussian.
\begin{assumption}\label{ass:noise}
The entries of $\epsilon$ are independent. Moreover, $\forall i \leq n, \mathbb E \epsilon_i =0, \mathbb V \epsilon_i = \sigma^2$ and $\forall i \leq n, \forall \lambda >0$, there exists $c>0$ such that
\begin{equation*}
\mathbb E \exp(-\lambda \epsilon_i) \leq \exp(-\lambda^2 c^2/2).
\end{equation*}
\end{assumption}
For instance bounded random variables and Gaussian random variables satisfy this Assumption.

\subsection{Assumption on the design matrix $X$}

We make the following assumption about the design matrix $X$.
\begin{assumption}\label{ass:design}
Let $\bar p>0$. The matrix $X$ is such that there exists two constants $C_M>c_m>0$ such that for any $u$ that is $\bar p-$sparse, we have
\begin{equation*}
c_m\|u\|_2 \leq \|\frac{1}{\sqrt{n}}Xu\|_2 \leq C_M\|u\|_2.
\end{equation*}
\end{assumption}
%The second part of the assumption implies that $\frac{1}{n}X'X$ has upper bounded eigenvalues. This is not a restrictive assumption, and it means that $X$ is not too far from being a contraction, which is the case for any reasonable design (e.g.~any correlated design with bounded condition number).  The first part of the assumption is more restrictive, and 
This assumption is a relaxation of the celebrated R.I.P. condition (see~\cite{foucart2009sparsest} for another paper in this setting). This condition makes sense, when $n \ll p$, only if $\bar p$ is actually smaller than $n$, i.e.~$\bar p = O(n/\log(p))$. In this case, for instance, random Fourier matrices, and more generally RIP matrices, satisfy this condition, with $c_m$ and $C_M$ close to $1$. More generally, e.g.~random matrices with sub-Gaussian entries whose variance-covariance matrix has bounded condition number satisfy this condition for $\bar p = O(n/\log(p))$ and $c_m$ and $C_M$ depending on the condition number of the variance-covariance matrix. %This assumption implies by triangular inequality that for any vector $u$ (even non sparse)

\subsection{The set of vectors of approximate sparsity}

We are interested in situations where $\theta$ is \textit{approximately} $S$-sparse. More specifically, we focus on vectors $\theta$ that have less than $S$ ``large" components, but that can have up to $\bar p$ ``small" components such that their $l_2$ norm is not too large.
\begin{definition}\label{def:app:spa}
We define the following sets of approximately $S-$sparse vectors, for $B,C,\bar p,\delta>0$, as 
\begin{align*}
\mathcal S_S(C,B, \bar p, \delta) &= \Big\{u  \in l_0(\bar p), \|u\|_2\leq B:\\ 
&\|u - l_0(S)\|_2^2 \leq \frac{CS\log(p/\delta)}{n} \Big\},
\end{align*}
where for a vector $u$, $\|u - l_0(S)\|_2^2 = \inf_{v \in l_0(S)} \|u - v\|_2^2 = \sum_{j=S+1}^p u_{(j)}^2$, where $u_{(.)}$ is the ordered version of $|u|$, i.e.~is such that $|u_{(1)}| \geq |u_{(2)}| \geq ... \geq |u_{(p)}|$. % Let us now define, 
\end{definition}
The vectors in these sets have at most $S$ ``large" components, and the $p-S$ remaining components have small $l_2$ norm. An important property of $\mathcal S_S(C,B, \bar p, \delta)$ is that it contains all $S-$sparse vectors whose $l_2$ norm is bounded by $B$, and is an enlargement of the set of sparse vector to ``approximately" sparse vectors. %$\mathcal S_S(C,B, \bar p)$ is is an enlargement of the set of $S-$sparse vectors in which adaptive inference is still possible. These are actually the natural objects to do inference in, and also to build confidence sets in.

%Proving the existence of an adaptive estimate on the enlarged sets is actually very similar to proving the existence of an adaptive estimate on the sets of exactly sparse vectors, and we will .
%where for any set $A \in \{1,...,p\}$, we write $A^C = \{1,...,p\} \setminus A$ the complementary set of $A$, and we write $|A|$ the number of elements in $A$ . 

Let $0<S_0<S_1$ be two sparsities. Similarly to what is proposed in Equation~\eqref{eq:sepsets}, we define the separated set as
\begin{align}\label{eq:sepsets2}
&\tilde {\mathcal S}_{S_1} (C,B, \bar p, \delta, \rho) := \tilde {\mathcal S}_{S_1,S_0} (C,B, \bar p, \delta, \rho) \nonumber\\
&= \{u \in \mathcal S_{S_1}(C,B, \bar p, \delta): \|u - l_0(S_0)\|_2 \geq \rho\}.
\end{align}
These sets are such that, between $\mathcal S_{S_0}$ and $\tilde {\mathcal S}_{S_1}$, there is a $\rho$-margin condition.

For the same value of $\rho$, these sets are strictly larger than the sets presented in Equation~\eqref{eq:sepsets} with bounded $l_2$ norm, which are actually the sets considered in paper~\cite{nickl2012confidence}. Indeed, they correspond to the vectors in the enlarged sets $\mathcal S_{S_1}(C,B, \bar p, \delta)$ that are at least $\rho$-away from $l_0(S_0)$. %They are thus strictly larger than the separated sets defined in~\citep{nickl2012confidence} (see Eq.~Equation~\eqref{eq:sepsets}). %This definition is less restrictive than just considering all vectors that are $\rho$-away from $\mathcal S_{S_0}(C,B, \bar p)$, as what is done in Equation~\eqref{eq:sepsets}. It has interesting implications, as we will see later. 
%are larger than the separated sets defined in Equation~\eqref{eq:sepsets} and correspond to vectors that have either more than $S_0$ "detectable" coordinates, or to vectors that are far enough from vectors in $\mathcal S_{S_0}(C,B)$
%As was explained in Equation~\eqref{eq:test1} in the introduction,

\subsection{Adaptive and honest confidence sets}

We now provide the definition of adaptive and honest confidence sets in the wider model of approximately sparse vectors. It is an extension of the definition provided in Equation~\eqref{eq:csetint2} to the larger set of approximately sparse vectors (it demands that the second equation in Definition~\eqref{def:csetint} holds also for approximately sparse vectors).
\begin{definition}\label{def:csetint}
Let $\delta, C, B, \bar p>0$. A set $C_n$  is a $\delta-$adaptive and honest confidence set for $\mathcal P  \subset   \mathcal S_{S_1}(C,B, \bar p, \delta)$ and for $\mathcal I \subset \{1, \ldots, \bar p\}$ if
\begin{align*}
&\max_{S \in \mathcal I} \sup_{\theta \in  \mathcal S_{S}(C,B, \bar p, \delta) \bigcap \mathcal P}\mathbb P_{\theta}(\theta \in C_n) \geq 1-\delta, \hspace{0.2cm} \mathrm{and} \hspace{0.2cm}\nonumber\\
&\max_{S \in \mathcal I} \sup_{\theta \in   \mathcal S_{S}(C,B, \bar p, \delta) \bigcap \mathcal P} \mathbb P_{\theta}\Big(|C_n|_2 \geq \sqrt{E' \frac{S \log(p/\delta)}{n}}\Big) \leq \delta,
\end{align*}
\end{definition}
where $E'>0$ is a constant.

\section{Adaptive estimation of $\theta$ on the enlarged sets}\label{sec:est}

%As mentioned in the introduction

We are first going to prove that on these enlarged sets $\mathcal S_S(C,B, \bar p, \delta)$, adaptive inference remains possible, i.e.~that it is possible to build an estimate of $\theta$ that satisfies results similar to what is described in Equation~\eqref{eq:adaptinf}. More precisely, we prove that if the design is not too correlated ($c_m$ and $C_M$ not too far from $1$ in Assumption~\ref{ass:design}) then the lasso estimator will provide good results on the enlarged sets. %In order to do that, we are going to consider estimators resulting from $l_p$ minimization where $0 \leq p \leq 1$, i.e.from
\begin{theorem}[Adaptive Lasso on the enlarged sets]\label{th:adaest}
Let Assumptions~\ref{ass:noise}, and~\ref{ass:design} be satisfied for $c, c_m, c_M, 66\bar p$ such that $c>0, c_m>2/3, C_M<4/3$ and $\bar p >0$. Let $B>0$ and $C>0$. Let $\delta>0$. The solution $\hat \theta$ of $l_1$ minimization or lasso
\begin{align*}
\hat \theta &= \arg\min_u \Big[ \|Y - Xu\|_2^2 +  \kappa \sqrt{\log(p/\delta)n}\|u\|_1\Big],
\end{align*}
where $\kappa> 4\max(c, \sqrt{C}/3, c^2, C/9)$ is such that we have $\forall 0< S \leq \bar p$
\begin{align*}
\sup_{\theta \in \mathcal S_S(C,B, \bar p, \delta) }\hspace{-1mm}&\mathbb P_{\theta}\Big( \|\hat \theta - \theta\|_2^2\\ 
&\geq \Big(12(36\kappa + 36)^2 + C^2 \Big)\frac{ S  \log(p/\delta)}{n}\Big) \leq \delta.
\end{align*}
%If moreover $c_m>2/3$ and $C_M <4/3$, then the solution of $l_1$ minimization or lasso (see Appendix~\ref{app:12}) satisfies the previous equation.
%If moreover the matrix $X$ satisfies the restricted eigenvalue condition, or compatibility condition (see Appendix~\ref{app:12} or \citep{bickel2009simultaneous}) for $2\bar p$ sparse vectors, then the solution of $l_1$ minimization or lasso (see Appendix~\ref{app:12}) satisfies also the previous equation where the constant $E$ depends also of the constant of the restricted eigenvalue condition. The restricted eigenvalue condition that is sufficient for this result is implied by $c_m>2/3$ and $C_M <4/3$ in Assumption~\ref{ass:design}.
\end{theorem}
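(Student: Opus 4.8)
The plan is to reduce the approximately-sparse problem to an exactly-sparse one and then run a restricted-isometry (RIP) style analysis of the Lasso. Fix $S$ and $\theta \in \mathcal{S}_S(C,B,\bar p,\delta)$, let $T$ index the $S$ largest coordinates of $\theta$, and write $\theta^\star = \theta_T$ (which is exactly $S$-sparse) and $r = \theta - \theta^\star$. By Definition~\ref{def:app:spa} and Assumption~\ref{ass:design} applied to the $(\bar p - S)$-sparse vector $r$, we have $\|Xr\|_2^2 \le C_M^2 n\|r\|_2^2 \le C_M^2\,CS\log(p/\delta)$. Rewriting the model as $Y = X\theta^\star + (Xr + \epsilon)$, I treat $Xr + \epsilon$ as effective noise and analyse $g := \hat\theta - \theta^\star$, the estimation error relative to the exactly sparse target $\theta^\star$; at the very end I return to $\theta$ by a triangle inequality, paying $\|r\|_2^2 \le CS\log(p/\delta)/n$.

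First I would control the stochastic term. By Assumption~\ref{ass:noise} and the fact that each column of $X$ has $\ell_2$-norm at most $C_M\sqrt n$ (Assumption~\ref{ass:design} for $1$-sparse vectors), each coordinate of $X^T\epsilon$ is sub-Gaussian with proxy of order $c^2 C_M^2 n$, so a union bound over the $p$ coordinates gives $\|X^T\epsilon\|_\infty \le \tfrac14\lambda$ with probability at least $1-\delta$, where $\lambda = \kappa\sqrt{n\log(p/\delta)}$; this is where the lower bound $\kappa > 4\max(c,c^2,\dots)$ enters. On this event, the optimality of $\hat\theta$, together with $Y = X\theta^\star + Xr + \epsilon$ and the bounds $\langle g, X^T\epsilon\rangle \le \tfrac14\lambda\|g\|_1$ and $2\langle Xg, Xr\rangle \le \tfrac12\|Xg\|_2^2 + 2\|Xr\|_2^2$, yields
\begin{equation*}
\tfrac12\|Xg\|_2^2 \le 2\|Xr\|_2^2 + \tfrac32\lambda\|g_T\|_1 - \tfrac12\lambda\|g_{T^c}\|_1 .
\end{equation*}
Since the left-hand side is nonnegative, this produces the approximate cone inequality $\|g_{T^c}\|_1 \le 3\|g_T\|_1 + 4\|Xr\|_2^2/\lambda$, whose extra additive slack is the price of approximate sparsity.

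The crux is to pass from the prediction error $\|Xg\|_2$ to the estimation error $\|g\|_2$, which is delicate because $g$ need not be sparse, so Assumption~\ref{ass:design} cannot be applied to it directly. I would use a Cand\`es-type block decomposition, but with large blocks: split $T^c$ into successive blocks $T_1, T_2, \dots$ of size $32S$ in decreasing order of magnitude of $g$, so that $\sum_{j\ge2}\|g_{T_j}\|_2 \le (32S)^{-1/2}\|g_{T^c}\|_1$ and, after inserting the cone inequality, the coefficient multiplying $\|g_{T_0\cup T_1}\|_2$ is $3/\sqrt{32}$ rather than $3$. Combining this with the near-orthogonality estimate $|\langle Xu, Xv\rangle| \le \tfrac{C_M^2 - c_m^2}{2}\,n\,\|u\|_2\|v\|_2$ for disjointly supported sparse $u,v$ (a parallelogram identity under Assumption~\ref{ass:design}) gives an inequality of the form $\big(c_m^2 - \tfrac{3}{\sqrt{32}}\tfrac{C_M^2 - c_m^2}{2}\big)\|g_{T_0\cup T_1}\|_2 \le \tfrac{C_M}{\sqrt n}\|Xg\|_2 + (\text{tail slack})$. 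The enlargement to blocks of size $32S$, which forces Assumption~\ref{ass:design} at the level $66\bar p$ (since $T_0\cup T_1$ and any later block together span at most $65S \le 65\bar p$ coordinates), is exactly what makes this leading coefficient strictly positive: with $c_m^2 > 4/9$ and $C_M^2 < 16/9$ one has $\tfrac{3}{\sqrt{32}}\tfrac{C_M^2 - c_m^2}{2} < \tfrac{3}{\sqrt{32}}\cdot\tfrac23 < c_m^2$, whereas the classical block size $S$ would leave it negative. This positivity is the main obstacle, and the reason both for the level $66\bar p$ and for the particular window $c_m>2/3$, $C_M<4/3$.

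Finally I would close the estimate. The displayed inequality and the one from the previous paragraph couple $\|Xg\|_2$ and $a := \|g_{T_0\cup T_1}\|_2$: the first bounds $\|Xg\|_2^2$ affinely in $a$ (via $\|g_T\|_1 \le \sqrt S\,a$), the second bounds $a$ affinely in $\|Xg\|_2$ plus a tail slack of order $(C/\kappa)\sqrt{S\log(p/\delta)/n}$; substituting one into the other gives a single quadratic inequality in $\|Xg\|_2$ whose solution is of order $\kappa\sqrt{S\log(p/\delta)/n}$ once the slack is absorbed. Controlling $\|g_{(T_0\cup T_1)^c}\|_2$ by the tail sum, adding it to $a$, and returning to $\theta$ through $\|\hat\theta - \theta\|_2^2 \le 2\|g\|_2^2 + 2\|r\|_2^2$ (with $\|r\|_2^2 \le CS\log(p/\delta)/n$) then delivers the stated bound $\big(12(36\kappa+36)^2 + C^2\big)S\log(p/\delta)/n$: the large $\kappa$-dependent constant reflects the deliberately crude but robust block bookkeeping, and the separate $C^2$ collects the approximate-sparsity contributions (the conditions $\kappa > 4\max(\sqrt C/3, C/9)$ are what let these be absorbed so that the bound takes this closed form). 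Because the only random event used, $\{\|X^T\epsilon\|_\infty \le \tfrac14\lambda\}$, does not depend on $\theta$, the entire deterministic chain holds simultaneously for all $\theta \in \mathcal{S}_S(C,B,\bar p,\delta)$ on that event, so taking the supremum over $\theta$ gives the claimed uniform probability bound, for every $0 < S \le \bar p$.
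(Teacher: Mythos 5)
Your proposal is correct in substance, but it takes a genuinely different route from the paper's own proof. Both arguments begin identically: split $\theta = \theta_1 + \theta_2$ into its $S$ largest coordinates and the tail, control the noise via a sub-Gaussian union bound on $\|X^T\epsilon\|_\infty$, and compare $\hat\theta$ against $\theta_1$ in the Lasso objective. From there the paper follows the Bickel--Ritov--Tsybakov path: it invokes Lemma 4.1 of \cite{bickel2009simultaneous} to convert Assumption~\ref{ass:design} at level $66\bar p$ (with $c_m>2/3$, $C_M<4/3$) into the compatibility condition (Assumption~\ref{ass:compcond}) with $\phi^2 = 1/6$, keeps the approximate-sparsity cross term $\langle X\theta_2, X(\theta_1-u)\rangle$ via Cauchy--Schwarz as an extra $C_M\sqrt{CS\log(p/\delta)}\,\|X(\theta_1-u)\|_2$ term, runs a two-case analysis to get prediction-error and $\ell_1$-error bounds, and finally converts $\ell_1$ into $\ell_2$ through the sorted-coefficient bound $a_{(j)} \le \|a\|_1/j$. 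You instead absorb the cross term by Young's inequality, so that the cone condition acquires an additive slack $4\|Xr\|_2^2/\lambda$, and then inline the geometry yourself: Cand\`es-type blocks of size $32S$, the parallelogram (restricted-orthogonality) bound $|\langle Xu, Xv\rangle| \le \tfrac{C_M^2-c_m^2}{2}\, n\,\|u\|_2\|v\|_2$ for disjointly supported sparse vectors, and the positivity of $c_m^2 - \tfrac{3}{\sqrt{32}}\cdot\tfrac{C_M^2-c_m^2}{2}$, which holds precisely in the window $c_m>2/3$, $C_M<4/3$ and requires the design assumption at level $65S \le 66\bar p$. This is, in effect, a self-contained re-derivation of exactly the fact the paper imports from \cite{bickel2009simultaneous} (whose RIP-implies-RE proof rests on the same block-enlargement arithmetic), and it buys two things: the $\ell_2$ estimation bound comes out directly, with no $\ell_1$-to-$\ell_2$ conversion and no case analysis, and it makes transparent why the hypotheses of the theorem (the level $66\bar p$ and the window on $c_m, C_M$) take the form they do. What it gives up is the literal constant: your chain yields a bound of order $\kappa^2 S\log(p/\delta)/n$ with a prefactor involving $\bigl(c_m^2 - \tfrac{3}{\sqrt{32}}\tfrac{C_M^2-c_m^2}{2}\bigr)^{-2}$ rather than the stated $12(36\kappa+36)^2 + C^2$; but since the paper's own bookkeeping carries an unspecified universal constant $D$ and does not literally produce that expression either, this discrepancy is cosmetic rather than a gap.
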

The proof of this theorem is in the Supplementary Material (see Appendix~\ref{app:12}). Proving this bound for the lasso on the enlarged sets is actually very similar to proving it on the set of exactly sparse vectors. An important remark is that the lasso's computational complexity is not high and is well defined, see~\cite{Tibshirani94regressionshrinkage}. As usual, the lasso does not work on too correlated designs, it can be applied when $c_m>2/3$ and $C_M <4/3$. When this is not satisfied, other techniques have to be considered, see e.g.~\cite{foucart2009sparsest}. It is actually possible to prove that for any $0<c_m<C_M$, there exists an estimate that satisfies a result similar to the one in Theorem~\ref{th:adaest}, see~Theorem~\ref{th:adaest2} in the Supplementary Material Appendix~\ref{app:1}. This estimate is however the result of $l_0$ minimization, and is thus computationally extensive (the computational complexity is $np^{S_0}$), and is in practice not implementable whenever $p, S_0$ are too large.

%(see~\citep{bickel2009simultaneous})

The really nice feature of such a result is that it provides an estimate whose $l_2$-risk is adaptive \textit{uniformly} to the sparsity of \textit{any} vector of the enlarged sparsity class, for any sparsity smaller than $\bar p$. % With respect to what was previously achieved, the novelty is that this estimate is uniformly adaptive on a class of approximately sparse vectors, and this class is larger than the class of sparse vectors.
The estimate is data driven, but it needs an upper bound $C$ on the amount of which $\theta$ deviates from the sparsity $S$, and also it needs an upper bound $c$ on the parameter that bounds the sub-Gaussian tail of the noise. %The first upper bound is however not required and was assumed for simplicity only. Indeed, it is possible to estimate it with high probability by e.g.~$2\sqrt{\sum_i Y_i^2} + C'/\sqrt{n}$. We can use this bound instead of $B$ and obtain similar results as what is obtained in Theorem~\ref{th:adaest}, up to a multiplicative constant.

\section{Adaptive and honest confidence sets for $\theta$}\label{sec:test}

We now propose a method that is computationally feasible for constructing an adaptive and honest confidence set for $\theta$. We fist present this method in the case of adaptivity to only two sparsities $S_0<S_1$ (two sparsity indexes method), and then explain how to extend these results to larger sets of sparsities (multi sparsity indexes method). %This method is based on a sparsity testing idea, i.e.~on a test for determining whether $\theta$ is $S_0$ sparse, or $S_1$ sparse. 

%We can now ask the question of testing between two sets $\mathcal S_{k_0}(p,B,C)$, and $\mathcal S_{k_1}(p,B,C)$, with $k_1>k_0$. This question is closely related to the question of confidence bands.

%and we want to adapt over $\{k_0, k_1\}$.

%We write $|\theta_{(1)}|\geq |\theta_{(2)}|\geq ... \geq |\theta_{(p)}|$ the reordered coordinates.\\

\subsection{Presentation of the confidence set for two sparsity indexes $S_0<S_1$}

\paragraph{Construction of the confidence set}

%Let $C,B,c, \sigma^2,\bar p>0$, and $\delta>0$.
Let $S_0<S_1 <\bar p$. The algorithm for the two sparsity indexes, Algorithm~\ref{alg:11}, contains two main steps. The first step is to construct a test $\Psi_n$ for deciding whether $\theta$ is $S_0$ approximately sparse, or $S_1$ approximately sparse. The test consists in first computing an adaptive estimate $\hat \theta$ of $\theta$, and then on thresholding all non-significant components. Then, the testing decision is based on two factors (i) testing whether the number of non-zero entries of the thresholded estimate is larger than $S_0$ and (ii) testing whether the squared residuals $\|\hat r\|_2^2$ are significant or not. If both these quantities are small enough, the test is accepted, otherwise it is rejected. The outcome of this is the test $\Psi_n$. The second step is to use this information to construct the confidence set $C_n$. Based on $\Psi_n$, the confidence interval $C_n$ will be of diameter of order $\sqrt{\frac{S_0\log(p)}{n}}$ (if $\Psi_n = 0$), or $\sqrt{\frac{S_1\log(p)}{n}}$ (if $\Psi_n = 1$). The procedure is explained in Algorithm~\ref{alg:11}.

\begin{algorithm}[h!]
\caption{Two sparsity indexes confidence set}
 \label{alg:11}
 \begin{algorithmic}
\STATE \textbf{Parameters:} $\delta, S_0, S_1, \sigma^2$
\SET the following quantities, computed on the first half of the samples only, as $\hat B^2:= 3/2 \Big(n^{-1}\sum_{i \leq n} Y_i^2 (1+2\log(1/\delta)) + 2\log(1/\delta)\Big)$,
and $\tau_n := 14|\hat B|\sqrt{n^{-1/2}\log(1/\delta)}+ 381|\hat B|\sqrt{\frac{S_0\log(p/\delta)}{n}}$,
and  $\tau_n' := 330|\hat B| \sqrt{\frac{S_1\log(p/\delta)}{n}}$,
and let $\hat \theta$ be the lasso estimate as in Theorem~\ref{th:adaest}. All these quantities are computed on the first half of the sample, and from now on we only use the second half of the sample.

\SET the residual $\hat r = Y - X \hat \theta$
\SET the test statistic $R_n = \|\hat r\|_2^2 - n \sigma^2$

\SET the test $\Psi_n = 1 -  \mathbf 1\{R_n \leq \tau_n^2\} \mathbf 1 \{  \sum_{j=S_0+1}^p \hat \theta_{(j)}^2 \leq (\tau_n')^2\}$,
where $\hat \theta_{(.)}$ is the ordered version of $|\hat \theta|$

\SET the confidence interval 
\begin{align*}
C_n := &\big\{u \in \mathcal S_{S_1}: \|u - \hat \theta\|_2 \leq  650\sqrt{ \frac{S_0 \log(p)}{n}} \mathbf 1\{\Psi_n = 0\}\\ 
&+ 650\sqrt{ \frac{S_1 \log(p)}{n}} \mathbf 1\{\Psi_n = 1\}\big\}.
\end{align*}
\STATE \textbf{return} $C_n$
\end{algorithmic}
\end{algorithm}
The parameter $\sigma^2$ car be replaced by a consistent estimator of the variance of the noise $\epsilon$ (e.g.~a Bootstrap estimate, a cross validation estimate, etc).

%As mentioned in the introduction, such an estimate exists over exactly sparse vectors if 

%We propose two constructions that are adaptive and honest over $\{k_0, k_1\}$ depending on whether $k_1$ is large, or $k_1$ is small. The constructions will work whatever $k_1$, but the first one gives better results when $k_1$ is small, and the second one gives better results when $k_1$ is large.

%Let $\hat \theta$ be a sparsified, adaptive estimate of $\theta$ as mentioned in in Section~\ref{sec:est}, computed on another sample (so that $\hat \theta$ is independent of $Y$).

%Let us divide the dataset in two disjoint subsets $\mathcal D_1$ and $\mathcal D_2$ of $n/2$ observations each and let $C,B>0$. 
%On each of these two subsets $\mathcal D_k, k \in \{1,2\}$, do as follows. Start by dividing again sample set $\mathcal D_k$ in two disjoint subsets of size $n/4$ each.
%Let $\hat \theta^{(1)}$ be the adaptive estimates defined in Section~\ref{sec:est}, computed on the the first half of the sample set, i.e.~on $\mathcal D_1$. % and $\hat S_{\theta}$ on this first half in the same way as in the construction as the adaptive estimate.

\paragraph{Main result}

The following theorem states that this confidence interval is adaptive and honest (in the sense of Definition~\eqref{def:csetint}) over a large model $\mathcal P$.% for $\rho_n$ large enough.

%Provided that we have the adaptive estimate $\hat \theta$, we do not need additional assumptions on $c_m, C_M$. However, the lasso estimate for $\hat \theta$ requires additional assumptions to be consistent, such that the ones of Theorem~\ref{th:adaest}. Since the confidence set we present needs an adaptive estimate to be imlplemented, the low computational complexity of this confidence set is thus guaranteed only when the lasso is consistent, i.e.~under the assumptions of Theorem~\ref{th:adaest}. Note that under the assumptions of Theorem~\ref{th:adaest} (which are standard ones for Lasso consistency), all the constants in the construction of the confidence set (which depend on the constants of Assumption~\ref{ass:design}) become explicit. The following procedure, similar to Algorithm~\ref{alg:1}, has no problem dependent parameters, and will be efficient under the same assumptions that ensure consistency of the adaptive lasso (see Theorem~\ref{th:adaest}).

%The following theorem holds.
\begin{theorem}\label{cor:conf}

Assume that the noise is either Gaussian of variance less than $1$, or bounded by $1$, and assume that the assumptions for convergence of the adaptive lasso, stated in Theorem~\ref{th:adaest}, are satisfied, and that $S_1 \leq \bar p$. Then the confidence set presented in Algorithm~\ref{alg:11} is $\delta$ adaptive and honest for $\mathcal I = \{S_0 ,S_1\}$ and over the model
$$\mathcal P(\rho_n) = \mathcal S_{S_0} (32,\infty, \bar p, \delta) \bigcup \tilde {\mathcal S}_{S_1} (32,\infty,\bar p, \delta,\rho_n),$$
where
\begin{align*}
\rho_n = |\hat B| \min\Big(54\sqrt{\log(1/\delta)} n^{-1/4}, 460\sqrt{\frac{S_1 \log(p/\delta)}{n}}\Big).
%&+ \sqrt{\frac{2(C_M+1)}{\min(c_m,1)}}\sqrt{E}\mathbf{\sqrt{\frac{S_0 log(p/\delta)}{n}}}.
\end{align*}
By definition of the enlarged set, this implies in particular that the confidence set is $\delta$ adaptive and honest for $\mathcal I = \{S_0 ,S_1\}$ and over the model
$$\mathcal P(\rho_n) =  l_0(S_0) \bigcup \tilde l_0(S_1, \tilde \rho_n), \quad \mathrm{with} \quad \tilde \rho_n = 2\rho_n.$$
%with $\tilde \rho_n = 2\rho_n$.
\end{theorem}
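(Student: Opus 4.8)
The plan is to reduce the two requirements of Definition~\ref{def:csetint} --- coverage and size --- to two operating-characteristic properties of the test $\Psi_n$, namely its \emph{level} on $\mathcal{S}_{S_0}$ and its \emph{power} on $\tilde{\mathcal{S}}_{S_1}$, and then to combine these with the Lasso bound of Theorem~\ref{th:adaest}. First I would record that $C_n$ is the Euclidean ball $\{u : \|u-\hat\theta\|_2 \le r_n\}$ of radius $r_n = 650\sqrt{S_0\log(p)/n}$ when $\Psi_n=0$ and $r_n = 650\sqrt{S_1\log(p)/n}$ when $\Psi_n=1$, so that $|C_n|_2 = 2r_n$; hence coverage is exactly the event $\{\|\theta-\hat\theta\|_2 \le r_n\}$ and the diameter is governed solely by $\Psi_n$. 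A preliminary lemma would control $\hat B$: using sub-Gaussian concentration (Assumption~\ref{ass:noise}) of $n^{-1}\sum_i Y_i^2$ around $\sigma^2 + n^{-1}\|X\theta\|_2^2$, together with Assumption~\ref{ass:design} to compare $\|X\theta\|_2^2$ with $\|\theta\|_2^2$, I would show that on an event of probability $\ge 1-\delta$ the quantity $|\hat B|$ lies between fixed multiples of a true scale, so that $\tau_n$, $\tau_n'$ and $\rho_n$ behave like deterministic thresholds.

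Given these reductions the four clauses are mechanical. For coverage when $\theta\in\mathcal{S}_{S_0}\cap\mathcal{P}$, Theorem~\ref{th:adaest} gives $\|\theta-\hat\theta\|_2 \le \sqrt{c_0 S_0\log(p/\delta)/n}$ with $c_0 = 12(36\kappa+36)^2+C^2$, which is below $r_n$ for \emph{either} value of $\Psi_n$ once $650$ is checked to dominate $\sqrt{c_0}$; so here coverage needs no information about the test. For coverage when $\theta\in\tilde{\mathcal{S}}_{S_1}\cap\mathcal{P}$ I would condition on the power event $\{\Psi_n=1\}$, on which $r_n=650\sqrt{S_1\log(p)/n}$ dominates the Lasso error $\sqrt{c_0 S\log(p/\delta)/n}$ for every $S\le S_1$. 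For the size, on $\mathcal{S}_{S_0}$ I condition on the level event $\{\Psi_n=0\}$ to get $|C_n|_2 = 1300\sqrt{S_0\log(p)/n} \le \sqrt{E' S_0\log(p/\delta)/n}$, while on $\tilde{\mathcal{S}}_{S_1}$ the bound $r_n\le 650\sqrt{S_1\log(p)/n}$ holds deterministically, so the size clause is automatic. Thus everything rests on proving $\sup_{\theta\in\mathcal{S}_{S_0}}\mathbb{P}_\theta(\Psi_n=1)\le\delta$ (level) and $\sup_{\theta\in\tilde{\mathcal{S}}_{S_1}}\mathbb{P}_\theta(\Psi_n=0)\le\delta$ (power).

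The level is the routine half. Since $\Psi_n=1$ needs one of the two acceptance conditions to fail, I would bound each failure probability. For the tail condition, $\sqrt{\sum_{j>S_0}\hat\theta_{(j)}^2}=\|\hat\theta - l_0(S_0)\|_2 \le \|\hat\theta-\theta\|_2 + \|\theta-l_0(S_0)\|_2 \lesssim \sqrt{S_0\log(p/\delta)/n}$ by Theorem~\ref{th:adaest} and the definition of $\mathcal{S}_{S_0}$, which is below $\tau_n'=330|\hat B|\sqrt{S_1\log(p/\delta)/n}$ as $S_0<S_1$. For the residual condition the clean object is the fit of the thresholded estimate $\hat\theta^{(S_0)}$ (the restriction of $\hat\theta$ to its $S_0$ largest coordinates), whose residual the two acceptance events control: writing $Y-X\hat\theta^{(S_0)}=X(\theta-\hat\theta^{(S_0)})+\epsilon$ and using the sample splitting so that $\hat\theta^{(S_0)}$ is independent of the second-half noise, the prediction-error term is $\lesssim C_M^2\|\theta-\hat\theta^{(S_0)}\|_2^2\lesssim S_0\log(p/\delta)/n$ on $\mathcal{S}_{S_0}$, while the cross term and the centred noise $\|\epsilon\|_2^2-n\sigma^2$ concentrate by sub-Gaussian/sub-exponential inequalities; the two terms of $\tau_n$ are calibrated to the noise-fluctuation scale and the $S_0$-bias scale respectively, giving $R_n\le\tau_n^2$ with probability $\ge 1-\delta$.

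The power is the main obstacle, and it is where the two-term form of $\rho_n$ and the $n^{-1/4}$ boundary are produced. The point is that the two acceptance conditions catch separation at two \emph{different} scales, and $\rho_n$ is the minimum of the two. When $\theta\in\tilde{\mathcal{S}}_{S_1}$ is separated at the Lasso-accuracy scale $\sqrt{S_1\log(p/\delta)/n}$, the plug-in tail condition rejects, since $\|\hat\theta - l_0(S_0)\|_2\ge\|\theta-l_0(S_0)\|_2-\|\theta-\hat\theta\|_2\ge\rho_n-\sqrt{c_0 S_1\log(p/\delta)/n}$ exceeds $\tau_n'$ once $460|\hat B|$ beats $330|\hat B|+\sqrt{c_0}$; this governs the term $460\sqrt{S_1\log(p/\delta)/n}$ and covers the regime $\sqrt{S_1\log(p/\delta)/n}\le n^{-1/4}\sqrt{\log(1/\delta)}$. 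The harder regime is $\rho_n= 54|\hat B|\sqrt{\log(1/\delta)}\,n^{-1/4}$, where the tail threshold $\tau_n'$ is too coarse and detection must come from the residual of $\hat\theta^{(S_0)}$ acting as a functional estimator: since $\hat\theta^{(S_0)}\in l_0(S_0)$, Assumption~\ref{ass:design} gives the deterministic lower bound $\|X(\theta-\hat\theta^{(S_0)})\|_2^2/n\ge c_m^2\|\theta-l_0(S_0)\|_2^2\ge c_m^2\rho_n^2$, and the crux is to show the stochastic terms concentrate at the functional rate $n^{-1/2}$ --- the cross term being mean-zero and, thanks to sample splitting, dominated by the signal $c_m^2\|\theta-\hat\theta^{(S_0)}\|_2^2$ as soon as $\|\theta-\hat\theta^{(S_0)}\|_2\gtrsim n^{-1/2}$ --- so that the residual exceeds $\tau_n^2$ exactly when $c_m^2\rho_n^2>\tau_n^2$, which is where $c_m>2/3$ and the constant $54$ enter. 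I expect most of the work to be this functional-estimation analysis and the verification that the explicit constants ($54,460,330,381,650$) close the chain with room to spare. Finally, the exactly-sparse statement follows since $l_0(S_0)\subset\mathcal{S}_{S_0}(32,\infty,\bar p,\delta)$ and $\tilde l_0(S_1,2\rho_n)\subset\tilde{\mathcal{S}}_{S_1}(32,\infty,\bar p,\delta,\rho_n)$, so honesty and adaptivity on the enlarged model transfer to the smaller model, the factor $2$ comfortably absorbing the approximate-sparsity slack.
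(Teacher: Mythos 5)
Your overall skeleton --- reduce honesty and adaptivity to the level of $\Psi_n$ on $\mathcal S_{S_0}$ and its power on $\tilde{\mathcal S}_{S_1}$, then combine with Theorem~\ref{th:adaest} --- is the same as the paper's (Theorem~\ref{cor:conf} is deduced there from Theorem~\ref{th:bigalter}, proved in Appendix~\ref{app:2}), and your two-regime power analysis, control of $\hat B$, and final set inclusions are in the right spirit. But there is a genuine gap: the claim that ``everything rests on'' level and power is false on part of the parameter range the theorem covers. Theorem~\ref{cor:conf} allows any $S_0<S_1\le\bar p$, in particular $S_0\ge n^{1/2}\log(1/\delta)/\log(p/\delta)$ (case (iii) of Table~\ref{fig:1}). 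To keep the level, $\tau_n$ must contain the term $381|\hat B|\sqrt{S_0\log(p/\delta)/n}$ (the residual of the Lasso under $H_0$ has bias at exactly that scale), so the residual test is blind to any separation below the scale $\sqrt{S_0\log(p/\delta)/n}$; likewise $\tau_n'$ sits at the even larger $S_1$ scale. When $S_0\ge n^{1/2}\log(1/\delta)/\log(p/\delta)$, both of these scales exceed $\rho_n=54|\hat B|\sqrt{\log(1/\delta)}\,n^{-1/4}$ (which is then the minimum in the definition of $\rho_n$), so for an alternative $\theta$ at distance of order $\rho_n$ from $l_0(S_0)$ both statistics stay below their thresholds with high probability and $\Psi_n=0$: the power bound $\sup_{\theta\in\tilde{\mathcal S}_{S_1}}\mathbb P_\theta(\Psi_n=0)\le\delta$ fails, and with it your coverage argument on $\tilde{\mathcal S}_{S_1}$, which conditions on $\{\Psi_n=1\}$. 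The theorem remains true there, but for a different reason, and this is the missing idea: an alternative that the test cannot detect is necessarily within $O(\sqrt{S_0\log(p/\delta)/n})$ of $l_0(S_0)$ (the condition $S_0\ge n^{1/2}\log(1/\delta)/\log(p/\delta)$ converts the $n^{-1/4}$ margin into the $S_0$-rate), hence it belongs to an enlarged class $\mathcal S_{S_0}(C'',\cdot,\cdot,\cdot)$ with a moderately larger constant; the Lasso then estimates it at the $S_0$ rate, so the $S_0$-radius ball covers it \emph{whatever the value of $\Psi_n$}. This is precisely the ``Case (iii)'' step of the paper's proof; correspondingly, the paper proves power only for the larger margin $\tilde\rho_n$, which contains an additional term of order $\sqrt{E\,S_0\log(p/\delta)/n}$ that your $\rho_n$ does not, and treats the leftover parameters by this estimator-adaptation argument rather than by testing.

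A secondary, patchable issue: Algorithm~\ref{alg:11} defines $R_n=\|Y-X\hat\theta\|_2^2-n\sigma^2$ with the \emph{full} Lasso estimate, whereas your level and power arguments analyze the residual of the thresholded vector $\hat\theta^{(S_0)}$, which is a different statistic. You can repair this either by relating the two under tail acceptance (then $\|\hat\theta-\hat\theta^{(S_0)}\|_2\le\tau_n'$), or, as the paper does, by keeping $\hat\theta$ throughout and replacing your deterministic bound $\|\theta-\hat\theta^{(S_0)}\|_2\ge\|\theta-l_0(S_0)\|_2$ with the triangle inequality $\|\theta-\hat\theta\|_2\ge\|\theta-l_0(S_0)\|_2-\|\hat\theta-l_0(S_0)\|_2$, the subtracted term being controlled by the accepted tail condition. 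Your reservation about whether the explicit constants $(54,330,381,460,650)$ close the chain is well founded, but that concerns the paper's stated constants rather than the structure of your argument.
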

%Note that the confidence set presented in Theorem~\ref{cor:conf} is fully explicit and that there are no free parameters, so it is directly implementable. The confidence set is then adaptive and honest under the same assumptions that ensure consistency of the lasso estimate. It is quite reasonable that it is so, since the creation of adaptive and honest confidence sets is a strictly more difficult problem than the problem of estimating the parameter (indeed, any point of the confidence set is a good estimate of the parameter).
This theorem is actually a corollary of a more general result, presented in the Supplementary Material, Appendix~\ref{app:expl}. The proof of this theorem is in the supplementary material (Appendix~\ref{app:2}).

The confidence set is adaptive and honest under the same assumptions that ensure consistency of the lasso estimate. It is quite reasonable that it is so, since the creation of adaptive and honest confidence sets is a strictly more difficult problem than the problem of estimating the parameter (indeed, any point of the confidence set is a good estimate of the parameter). Also, since $\lim_{n\rightarrow \infty} \rho_n = 0$, for any $\theta$ and for $n$ large enough, the confidence set contains $\theta$ with high probability, and its diameter adapts to the sparsity of $\theta$. It is adaptive to the two sparsities $\{S_0, S_1\}$ only and not to the whole spectrum of sparsities, but it already allows to improve many existing learning algorithms by diminishing the size of the confidence interval (by not always setting it to $\sqrt{S_1\log(p)/n}$ independently of $\theta$). Moreover, it computational complexity is of order $np$, which is linear.

% The third paper provides a construction for confidence sets of width 
%$$n^{-1/4} + \sqrt{\frac{S\log(p)}{n}}.$$
%This is obviously not adaptive in the sense of Equation~\eqref{eq:cset} whenever $S\leq n^{-1/2}/\log(p)$, i.e.~in the moderately sparse case.

\paragraph{Comparison with results in paper~\cite{nickl2012confidence}.}  Our results imply all the upper bounds of~\cite{nickl2012confidence} in all cases (i), (ii) and (iii) of Table~\ref{fig:1}, i.e.~imply the upper bounds on exactly sparse sets (this is illustrated in Theorem~\ref{cor:conf}). %This comes from the fact that the bounds we prove hold on the model 
%$$\tilde {\mathcal S}_{S_1} (C,B, \bar p, \delta, \rho_n) \cup  \mathcal S_{S_0}(C,B, \bar p,\delta),$$
%which is strictly larger than 
%$$\Big(l_0(S_0) \cap l_2(B)\Big)  \cup \Big(\tilde {l_0}(S_1, (1+C)\rho_n) \cap l_2(B)\Big),$$
%i.e.~the model in~\cite{nickl2012confidence} which is strictly larger (it contains approximately sparse vectors).
Also, our confidence set is adaptive and honest in all three cases (i), (ii) and (iii), and we do not need to change the construction method as in paper~\cite{nickl2012confidence}. Our assumptions on the design of $X$ are weaker than in the paper~\cite{nickl2012confidence}, where the authors consider Gaussian design which a fortiori satisfies Assumption~\ref{ass:design} with high probability. Finally, the confidence set is, as we saw, computationally feasible, since its computational complexity is of order $np$. As mentioned in the introduction, the procedure in the paper~\cite{nickl2012confidence} boils down to minimizing over the set of $S_0-$ sparse vectors a quadratic quantity, which has complexity of order $p^{S_0}n$. This implies that our procedure is computationally efficient on a set that is as large as possible in a minimax sense, as illustrated by the lower bounds in Figure~\ref{fig:1}.%However, an important remark is that the proofs in paper~\cite{nickl2012confidence} hold for more general assumptions on the design matrix $X$ (i.e.~correlated entries). We believe however that our results can be extended to more general designs, trivially for any matrix $X$ with centred sub-Gaussian entries , or for matrices with correlated entries, but where one has an upper bound on the condition number of the covariance matrix, and also for RIP matrices.%Although we provide these results under some restrictive assumptions on the design matrix $X$ for the sake of simplicity, we believe that transposing them to more general settings (e.g.~$X$ RIP) is not difficult

%In the case of $S_1 \leq n^{1/2}/\log(p)$, we still impose in Theorem~\eqref{th:bigalter} that $\rho_n \geq n^{-1/4}$, whereas the results in paper~\cite{nickl2012confidence} hold for $\rho_n$ of order $\sqrt{\frac{S_1\log(p)}{n}}$. Our results seem weaker but are actually not. Indeed, the sets $\tilde {\mathcal S}_{S_1} (C,B,\rho_n)$ that we define in Equation~\eqref{eq:sepsets2} are actually quite different from the sets $\tilde {\mathcal S}_{S_1} (\rho_n)$ described in Equation~\eqref{eq:sepsets}. In particular the set $\tilde {\mathcal S}_{S_1} (C,B,n^{-1/4})$ (as described in Equation~\eqref{eq:sepsets2})contains the set $\tilde {\mathcal S}_{S_1} (C,B,\sqrt{\frac{S_1\log(p)}{n}})$. This implies that our test is uniformly consistent on a region that is larger than what is achieved in paper~\cite{nickl2012confidence}.

\subsection{Adaptive and honest confidence sets for multiple sparsities}\label{ss:estS}

\paragraph{Construction of the confidence set}

In the last subsection, we restricted ourselves to constructing a confidence set that is adaptive to only two sparsities $S_0, S_1$. Although it is already useful with respect to existing techniques that are not adaptive at all, it is only a first step toward a more global result where all sparsity indexes $\mathcal I = \{1, \ldots, \bar p\}$ are considered. Algorithm~\ref{alg:21} solves this problem.

%A result on this can however be deduced from the two sparsity indexes confidence set. 
%Let $\mathcal I = \{i_1, \ldots, \i_I\}$, where $i_1 \leq i_2 \leq \ldots \leq i_I$ be the set of sparsities over which one wishes to adapt. An intuitive extension of the two sparsities index confidence set to this more complex setting is presented in Algorithm~\ref{alg:2}. The idea is to compute the tests $\Psi_n$ for the adjacent sparsity indexes in $\mathcal I$, for smaller and smaller sparsities, until the sparsity index $t$ where the test is rejected ($i_{t-1}$ is rejected). The confidence band considered is then chosen of diameter of order $\sqrt{i_t\log(p)/n}$, and is centred in an adaptive estimate $\hat \theta$. 

%Similarly as what was done in the two points case, it is also possible to precise all the constants in the algorithm under the assumption that the lasso is consistent (see Corollary~\ref{cor:conf}). In this case, also, it is possible to have a more direct and constructive procedure. It is as follows.

\vspace{-0.3cm}

\begin{algorithm}[h!]
\caption{Multi sparsity indexes confidence set, second version}
 \label{alg:21}
 \begin{algorithmic}
\STATE \textbf{Parameters:} $\delta, \sigma^2$
\SET using only the first half of the $2n$ samples,
$$\hat B^2:= 3/2 \Big(n^{-1}\sum_{i \leq n} Y_i^2 (1+2\log(1/\delta)) + 2\log(1/\delta)\Big),$$
and $\tau_n(S) := 14|\hat B|\sqrt{n^{-1/2}\log(1/\delta)}+ 381|\hat B|\sqrt{\frac{S\log(p/\delta)}{n}}$,
and $\tau_n'(S) := 330|\hat B| \sqrt{\frac{(S+1)\log(p/\delta)}{n}}$,
and let $\hat \theta$ be the lasso estimate as in Theorem~\ref{th:adaest}.

\SET the residual $\hat r = Y - X \hat \theta$
\SET the statistic $R_n = \|\hat r\|_2^2 -  \sigma^2$
\SET for any $S \leq p$ the statistic  $R_n'(S):=  \sum_{j=S+1}^p \hat \theta_{(j)}^2$,
where $\hat \theta_{(.)}$ is the ordered version of $|\hat \theta|$.

\SET $\hat S$ as the smallest $S\leq p$ such that $R_n \leq \tau_n(S)^2, \quad \mathrm{and}, \quad R_n'(S) \leq (\tau_n'(S))^2$.

\SET the confidence interval 
$$C_n := \big\{u \in \mathbb R^p: \|u - \hat \theta\|_2 \leq 650 \sqrt{\frac{\hat S \log(p/\delta)}{n}}.$$
\STATE \textbf{return} $C_n$

\end{algorithmic}
\end{algorithm}
The parameter $\sigma^2$ car be replaced by a consistent estimator of the variance of the noise $\epsilon$ (e.g.~a Bootstrap estimate, a cross validation estimate, etc).

The following theorem holds in this case (it is a direct consequence of Theorem~\ref{cor:conf}).
\begin{theorem}\label{cor:bigalter2}
Assume that the noise is either Gaussian of variance less than $1$, or bounded by $1$, and assume that the assumptions for convergence of the adaptive lasso, stated in Theorem~\ref{th:adaest}, are satisfied, and that for $S_1 \leq \bar p$. Then the confidence set presented in Algorithm~\ref{alg:21} is $\delta$ adaptive and honest for $\mathcal I = \{1, \ldots, \bar p\}$ and over the model
$$\mathcal P:=\mathcal S_{1} (32,\infty, \bar p, \delta) \bigcup \bigcup_{S=2}^{\bar p} \tilde {\mathcal S}_{S, S-1} (32,\infty,\bar p, \delta,\rho_n(S)).$$
where
\small{
\begin{align*}
&\tilde \rho_n(S) = |\hat B| \min\Big(50\sqrt{\log(1/\delta)} n^{-1/4}, 460\sqrt{\frac{(S+1) \log(p/\delta)}{n}}\Big).
%&+ \sqrt{\frac{2(C_M+1)}{\min(c_m,1)}}\sqrt{E}\mathbf{\sqrt{\frac{S_0 log(p/\delta)}{n}}}.
\end{align*}}
\end{theorem}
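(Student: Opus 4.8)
The plan is to read Theorem~\ref{cor:bigalter2} off Theorem~\ref{cor:conf} by recognizing that the estimated sparsity $\hat S$ produced by Algorithm~\ref{alg:21} is exactly the smallest level at which the two-index test of Algorithm~\ref{alg:11} accepts. The thresholds are calibrated precisely so that the pair of conditions $R_n \le \tau_n(S)^2$ and $R_n'(S) \le (\tau_n'(S))^2$ defining $\hat S$ coincides, at each level $S$, with the acceptance event $\Psi_n=0$ of Algorithm~\ref{alg:11} run with sparsities $(S_0,S_1)=(S,S+1)$: there $\tau_n$ carries $S_0=S$ and $\tau_n'$ carries $S_1=S+1$, matching $\tau_n(S)$ and $\tau_n'(S)$. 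Writing $\Psi_n^{(S_0,S_1)}$ for that test, we therefore have $\hat S = \min\{S : \Psi_n^{(S,S+1)} = 0\}$, and the whole proof reduces to sandwiching $\hat S$ around the effective sparsity of $\theta$ from both sides.

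For the upper sandwich, which controls the diameter and hence adaptivity: if $\theta \in \mathcal S_S(32,\infty,\bar p,\delta)$, then applying Theorem~\ref{cor:conf} to the pair $(S,S+1)$ gives, with probability at least $1-\delta$, acceptance at level $S$, so $\hat S \le S$ and $|C_n|_2 \le 1300\sqrt{S\log(p/\delta)/n}$. For the lower sandwich, which controls coverage: if moreover $S\ge 2$ and $\theta \in \tilde{\mathcal S}_{S,S-1}(32,\infty,\bar p,\delta,\tilde\rho_n(S))$, then $\theta$ lies at $l_2$-distance at least $\tilde\rho_n(S)$ from $l_0(S-1)$, and Theorem~\ref{cor:conf} applied to the pair $(S-1,S)$ forces rejection at level $S-1$ with probability at least $1-\delta$, i.e.\ $\Psi_n^{(S-1,S)}=1$. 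The observation that makes this a \emph{direct} consequence, rather than one needing a union bound over all $\bar p$ levels, is a monotonicity fact: the tail statistic $R_n'(S') = \sum_{j>S'}\hat\theta_{(j)}^2$ is nonincreasing in $S'$ while $(\tau_n'(S'))^2$ is nondecreasing, and $R_n$ is independent of $S'$ while $\tau_n(S')$ is nondecreasing. Hence a single rejection at level $S-1$ — whichever of the two conditions is the one violated — propagates deterministically to every level $S'\le S-1$, yielding $\hat S \ge S$ at no further probabilistic cost.

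Combining the two sandwiches, for any fixed $\theta \in \mathcal S_S \cap \mathcal P$ we get $\hat S = S$ on an event of probability at least $1-2\delta$ (running the two invocations of Theorem~\ref{cor:conf} at level $\delta/2$ only perturbs constants through $\log(p/\delta)$). On this event coverage holds because the radius $650\sqrt{S\log(p/\delta)/n}$ dominates $\|\hat\theta-\theta\|_2$ by Theorem~\ref{th:adaest}, the constant $650$ being chosen to exceed the square root of the lasso constant at $C=32$; and adaptivity holds because $|C_n|_2=1300\sqrt{S\log(p/\delta)/n}$ is of the prescribed order $\sqrt{E'S\log(p/\delta)/n}$. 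The base case $\theta\in\mathcal S_1$ needs only the upper sandwich at level $1$, since $\hat S\ge 1$ is automatic. Throughout, the sample splitting built into Algorithm~\ref{alg:21} is what renders $\hat\theta$ and $\hat B$ (computed on the first half) independent of the second-half noise entering $R_n$, so the concentration arguments behind Theorem~\ref{cor:conf} transfer verbatim.

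The main obstacle I anticipate is the constant-level bookkeeping needed to certify that the margins $\tilde\rho_n(S)$ appearing in the model are at least as large as the margins $\rho_n$ that Theorem~\ref{cor:conf} demands for the pair $(S-1,S)$, uniformly in $S$ and across all three regimes of Table~\ref{fig:1}. One must reconcile the off-by-one between the index $S$ of the separated set $\tilde{\mathcal S}_{S,S-1}$ and the index $S+1$ entering $\tilde\rho_n(S)$ and the threshold $\tau_n'(S-1)=330|\hat B|\sqrt{S\log(p/\delta)/n}$, and verify that the $n^{-1/4}$ branch of the minimum — which governs the regime where it is the goodness-of-fit statistic $R_n$, and not the monotone tail statistic $R_n'$, that detects non-sparsity — still forces rejection at level $S-1$ despite the slightly reduced constant. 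Establishing that this single binding level suffices in the residual-driven regime is the one point where the clean monotonicity argument must be supplemented by the quantitative estimates from the proof of Theorem~\ref{cor:conf}.
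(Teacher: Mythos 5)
Your proposal is correct, and it shares the paper's basic reduction: treat Algorithm~\ref{alg:21} as running the two-index test of Algorithm~\ref{alg:11} at adjacent pairs $(S,S+1)$, then invoke the two-index analysis under $H_0$ at the true level $S$ and under $H_1$ at level $S-1$. But your combining step is genuinely different from the paper's, and sharper. The paper's entire argument for the multi-index case (Theorem~\ref{cor:bigalter}, of which Theorem~\ref{cor:bigalter2} is an instance) is that it ``follows immediately from Theorem~\ref{th:bigalter}, and an union bound over all tests''; taken literally, that union bound inflates the failure probability by a factor of $|\mathcal I| = \bar p$ unless each test is run at level $\delta/\bar p$, which the algorithm as stated does not do. Your monotonicity observation --- $R_n'(\cdot)$ nonincreasing while $\tau_n'(\cdot)$ and $\tau_n(\cdot)$ are nondecreasing in the sparsity index, so rejection propagates deterministically to all smaller levels and acceptance to all larger ones --- eliminates the union bound: for any fixed $\theta$, the event $\{\hat S = S\}$ is controlled by exactly two tests, $(S-1,S)$ and $(S,S+1)$, regardless of $\bar p$. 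This actually delivers the claimed $\delta$-level (up to the constant-factor inflation $\delta \to 3\delta$ already present in the two-index proof), so your route proves something slightly stronger than the paper's one-line argument does. Two caveats, both of which you correctly flag yourself: first, what you need from the two-index result is the uniform test consistency, Equation~\eqref{def:uct} in the proof of Theorem~\ref{th:bigalter}, not the black-box statement of Theorem~\ref{cor:conf} --- adaptivity and honesty of the two-index set do not formally imply the acceptance/rejection probabilities, so your ``sandwich'' must cite the proof, not the theorem; second, the constant bookkeeping cannot be fully reconciled, because the paper's own statement is internally inconsistent ($50$ in the $n^{-1/4}$ branch versus the $54$ required by Theorem~\ref{cor:conf} for the pair $(S-1,S)$, and $(S+1)$ versus $S$ under the square root), which is a defect of the theorem as stated rather than a gap in your argument.
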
 
A more general procedure is presented in the Supplementary Material, Appendix~\ref{app:expl}.

\paragraph{Discussion} This result is minimax optimal from the lower bounds in Figure~\ref{fig:1}, and it is also computationally feasible. The resulting confidence interval is adaptive and honest for all indexes $\mathcal I$ over $\mathcal P$. Moreover, $\mathcal P$ is significantly larger than the set of ``detectable" parameters such that all non-zero component are larger than $\sqrt{\log(p/\delta)/n}$. For this reason, this method is more efficient than the naive method of counting the number of non-zero entries in a thresholded adaptive estimate, and using this number for constructing the confidence set. %Moreover, as proved in the second corollary, under the same assumptions that ensure lasso consistency, the confidence set's constants are fully explicit.

\section{Experimental results}

In this section, we present some simulations and also some applications on images.

\subsection{Simulations}

In order to illustrate the efficiency of our method, we apply it to simulated data. We consider a problem in dimension $p=10000$, and where $n=1000$ (the sampling rate is $10\%$). Let $0<S_0<S_1$ be the two approximate sparsity levels. We are going to define three types of distributions (priors) on the set of parameter $\theta$:
\begin{itemize}
\item $\theta \sim \Theta_1$: (i) $S_0$ random coordinates of $\theta$ are $\mathcal N(0,1)$ and (ii) the remaining coordinates are $\mathcal N(0,\sigma_0^2)$ where $\sigma_0^2= \frac{S_0\log(p)}{np}$. With high probability, $\theta \in \mathcal S_{S_0}(C,B, \bar p, \delta)$.
\item $\theta \sim \Theta_2$: (i) $S_1$ random coordinates of $\theta$ are $\mathcal N(0,1)$ and (ii) the remaining coordinates are $\mathcal N(0,\sigma_0^2)$. With high probability, $\theta \in  \tilde{\mathcal S}_{S_1}(C,B, \bar p, \delta, \rho_n^2)$ where $\rho_n^2 = O(\frac{S_1\log(p)}{n} + \frac{S_0\log(p)}{n})$.
\item $\theta \sim \Theta_3$: (i) $S_0$ random coordinates of $\theta$ are $\mathcal N(0,1)$ and (ii) the remaining coordinates are $\mathcal N(0,\sigma_1^2)$ with $\sigma_1^2= C\left(\frac{1}{n^{1/2}p} + \frac{S_0\log(p)}{np}\right)$. With high probability, $\theta \in \tilde{\mathcal S_{S_1}}(C,B, \bar p, \delta, \rho_n^2)$, where $\rho_n^2 = O(n^{-1/2} + \frac{S_0\log(p)}{n})$.% where $\rho_n^2 = C(n^{1/2} + \frac{S_0\log(p)}{n})$.
\end{itemize}
%\hspace{-1cm}
%\includegraphics[width=14cm]{expe3.png}
%\caption{Zoom on the mean squared error w.r.t.~the integral of $f$ of uniform stratified Monte-Carlo and LMC-UCB, in function of the budget $n$.} \label{fig:exp3}

See Figure~\ref{fig:exp2} for an illustration of this.

\begin{center}
\begin{figure}[h]
\begin{center}
%\begin{minipage}{7cm}
%\hspace{-0.83cm}
\includegraphics[width=7cm]{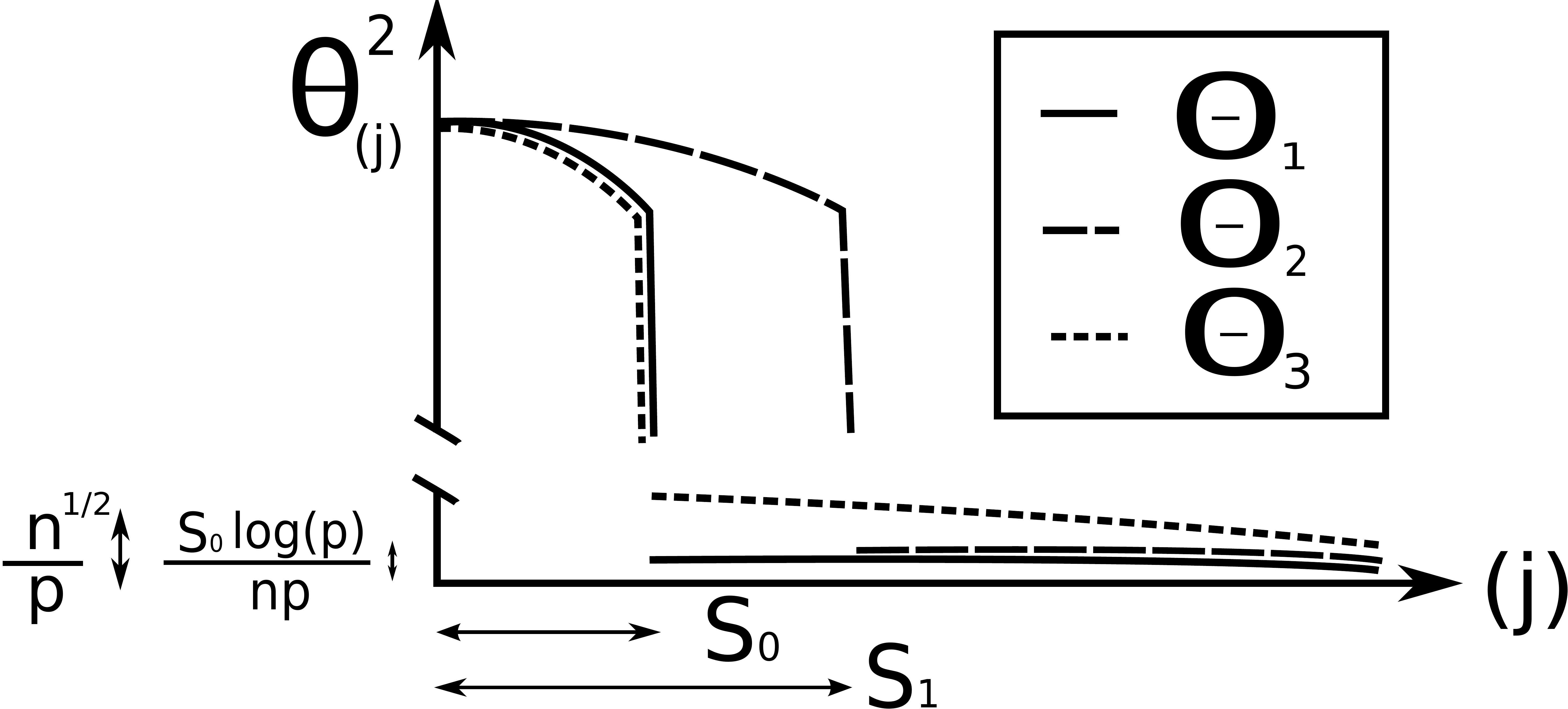}
\caption{Mean of the square of the re-ordered coordinates $\theta_{(j)}^2$ of $\theta$ sampled according to priors $\Theta_1$, $\Theta_2$, and $\Theta_3$.} \label{fig:exp2}
\end{center}
\end{figure}
\end{center}

The distributions correspond to two extremal cases in which the sampled vector $\theta$ is not approximately sparse, i.e.~either the norm of the tail coefficients is large, or the number of detectable coefficients is larger than $S_0$. For a given $\theta \sim \Theta_k$ ($k \in \{1,2,3\}$), we write $\Psi$ for its ``class", i.e.~if the hypothesis in test~\eqref{test:2} to which they belong with high probability. This means that $\Psi=0$ for $\theta \sim \Theta_1$ and $\Psi=1$ for $\theta \sim \Theta_2$ or $\theta \sim \Theta_3$.

%For different levels of $S_1$ and $S_2$, and 
For all distributions $\Theta_k$, we do $10000$ experiments (corresponding to trying our methods over $10000$ samples of $\theta$ sampled according to $\Theta_k$) where we perform the test we described in Section~\ref{sec:test} for the testing problem~\eqref{test:2}, infer the sparsity, compute adaptive confidence interval, and compute the risk of the estimate. The sampling matrix $X$ is composed of Gaussian random variables, and the noise $\epsilon$ is i.i.d.~Gaussian with variance $1$. In this design, $\bar p = O(n/\log(p))$. We compute an estimate of $\theta$ via hard thresholding, which happens in this orthogonal setting to be equivalent to lasso, on the first half of the samples. We then construct the test on the second half of the sample. We summarise the results in Table~\ref{f:sim2}. %For this Figure, we choose $S_1 = 5$ and $S_2 = 1000$, i.e.~testing between very sparse null hypothesis and not sparse alternative.
%, and Figure~\ref{f:sim3}, and we write respectively (ii) and (iii) for these cases. They correspond approximately to cases (i), (ii) and (iii) in Figure~\ref{} (since case (i) has already been widely studied numerically). %

\begin{table*}[t]
\begin{footnotesize}
\begin{center}
\begin{tabular}{|c||ccc|ccc|ccc|}
  \hline
 & $S_1 = 5$ &  and &  $S_2 = 10$ & $S_1 = 5$ &  and &  $S_2 = 10^3$ & $S_1 = 50$ &  and &  $S_2 = 10^3$ \\
\textbf{Prior} & $\Theta_1$ & $\Theta_2$ & $\Theta_3$& $\Theta_1$ & $\Theta_2$ & $\Theta_3$& $\Theta_1$ & $\Theta_2$ & $\Theta_3$\\
\hline
\hline
  $\mathbb P_{\theta}(\Psi_n \neq \Psi)$ & $5\hspace{1mm}10^{-2}$ & $1\hspace{1mm}10^{-2}$ & X & $1\hspace{1mm}10^{-2}$ & $8\hspace{1mm}10^{-7}$  &   $8\hspace{1mm}10^{-2}$ & $1\hspace{1mm}10^{-3}$&   $2\hspace{1mm}10^{-5}$ &  $6\hspace{1mm}10^{-3}$\\
  \hline
 $\mathbb E \|\hat \theta\|_0$ &   $4.8$ & $9.7$ & X & $4.8$ &  $2.1\hspace{1mm}10^{3}$ & $4.8$ &$48.7$&  $2.3\hspace{1mm}10^{3}$ &  $49.3$\\
  \hline
  $\mathbb P_{\theta}(\theta \not\in C_n)$ &  $1.2\hspace{1mm}10^{-1}$&  $9\hspace{1mm}10^{-2}$ & X & $5\hspace{1mm}10^{-2}$ &  $9.10^{-7}$ &   $8.1 \hspace{1mm}10^{-2}$ & $3.2\hspace{1mm}10^{-3}$&  $4\hspace{1mm}10^{-5}$ &  $7.8\hspace{1mm}10^{-3}$\\
  \hline
 $\mathbb E |C_n|$ &  $6.16\hspace{1mm}10^{-2}$&  $1.8\hspace{1mm}10^{-1}$ & X & $7.76\hspace{1mm}10^{-2}$ & $9.8 \hspace{1mm}10^{3}$  & $5.7\hspace{1mm}10^{-1}$ & $27.6$&  $9.9\hspace{1mm}10^{3}$&  $9.7\hspace{1mm}10^{3}$\\
  \hline
 $\mathbb E \|\hat \theta - \theta\|_2^2$ &  $3.37\hspace{1mm}10^{-2}$& $1.1\hspace{1mm}10^{-1}$ & X & $3.7 \hspace{1mm}10^{-2}$ &  $9.6\hspace{1mm}10^{3}$ & $3.2\hspace{1mm}10^{-1}$ & $15.04$&  $9.7\hspace{1mm}10^{3}$& $9.4\hspace{1mm}10^{3}$\\
  \hline
\end{tabular}
%\caption{Characteristics of the distributions for different values of the strike $K$ and performance of the different strategies in terms of regret for a budget $n=100000$.}
\end{center}
\caption{Expected results for the test, risk and adaptive and honest confidence set for the three priors.}\label{f:sim2}
\end{footnotesize}
\end{table*}

A first general remark is that the test we consider manages to distinguish efficiently between $H_0$ and $H_1$, for many different configuration of sparsity. The adaptive and honest confidence sets we built using this test are also quite efficient. The probability that the true parameter belongs to the adaptive confidence set is very close to the probability of correctly inferring the class of $\theta$. The strength of these sets is to be adaptive to the sparsity of the problem, i.e.~they contain $\theta$ with high probability and do not have the same width depending on the complexity of the true parameter ($S_0$ or $S_1$). As a matter of fact, the width of the adaptive confidence set is close to the value of the risk of the adaptive estimate, which is exactly what is wanted. It is particularly interesting, since as expected, the risk is much larger under $H_1$ than under $H_0$.

In the case of distribution $\Theta_3$, it is interesting to remark that although the sparsity of $\hat \theta$ is close to $S_0$ in expectation, it does not prevent our test to efficiently classify it as $H_1$. This is actually quite important in terms of confidence sets since we can observe that, for each configuration of sparsity, the risk and thus the width of the adaptive confidence interval, is much larger for $\Theta_3$ than for $\Theta_1$. A test only based on the inferred sparsity (i.e.~on $\|\hat \theta\|_0$) would not have detected this since the inferred sparsity is approximately similar in these two cases.

\subsection{Application of the method to images}

\begin{center}
\begin{figure}[t]

\begin{minipage}{1.7cm}
%\hspace{-0.83cm}
\includegraphics[width=2.7cm]{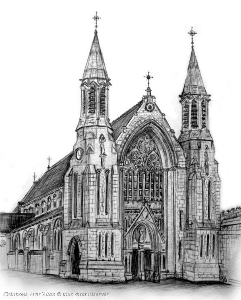}
%\end{center}
\end{minipage}
\hspace{1cm}
\begin{minipage}{1.7cm}
\includegraphics[width=2.7cm]{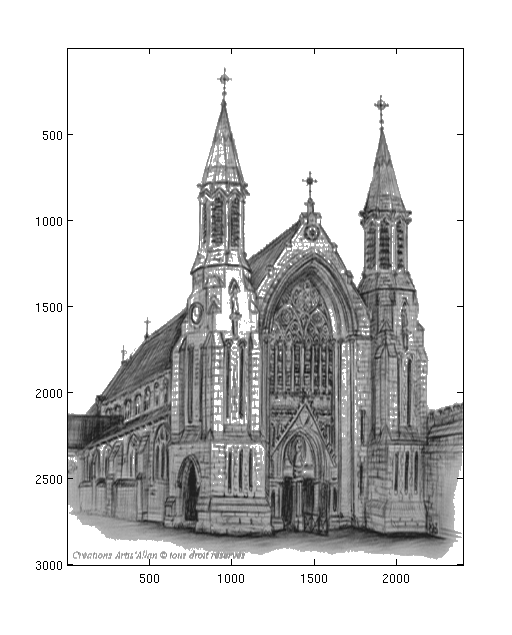}
\end{minipage}
\hspace{1cm}
\begin{minipage}{1.7cm}
\includegraphics[width=2.7cm]{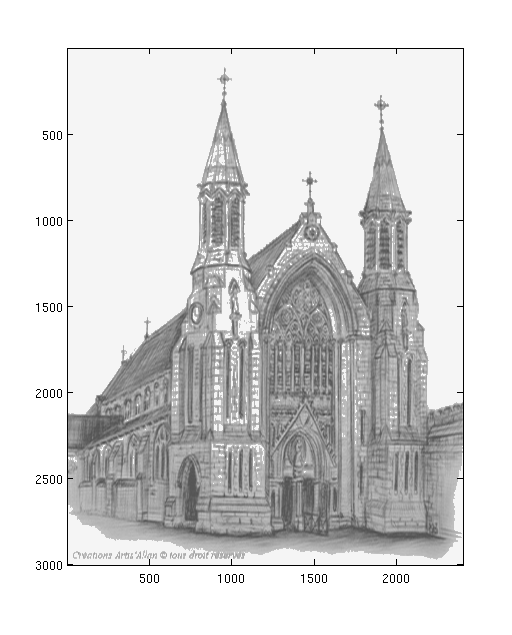}
%\caption{Mean of the square of the re-ordered coordinates $\theta_{(j)}^2$ of $\theta$ sampled according to priors $\Theta_1$, $\Theta_2$, and $\Theta_3$.} \label{fig:exp5}
\end{minipage}

\begin{minipage}{1.7cm}
%\hspace{-0.83cm}
\includegraphics[width=2.7cm]{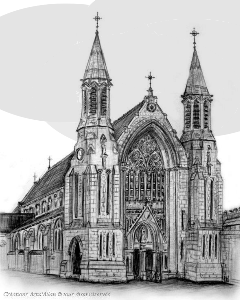}
%\end{center}
\end{minipage}
%\hspace{-1cm}
\hspace{1cm}
\begin{minipage}{1.7cm}
\includegraphics[width=2.7cm]{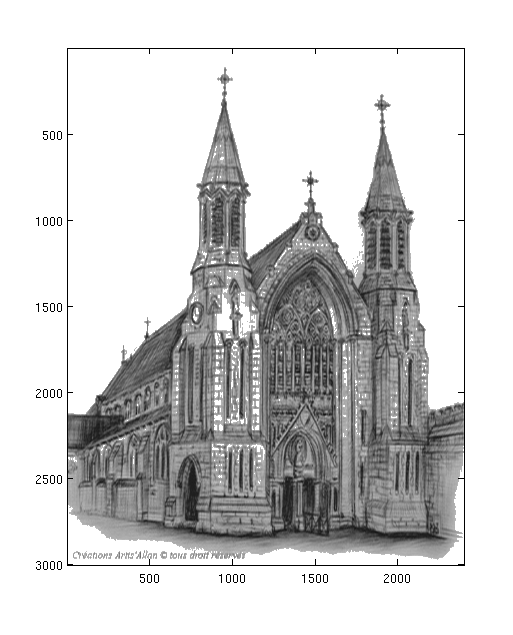}
\end{minipage}
\hspace{1cm}
\begin{minipage}{1.7cm}
\includegraphics[width=2.7cm]{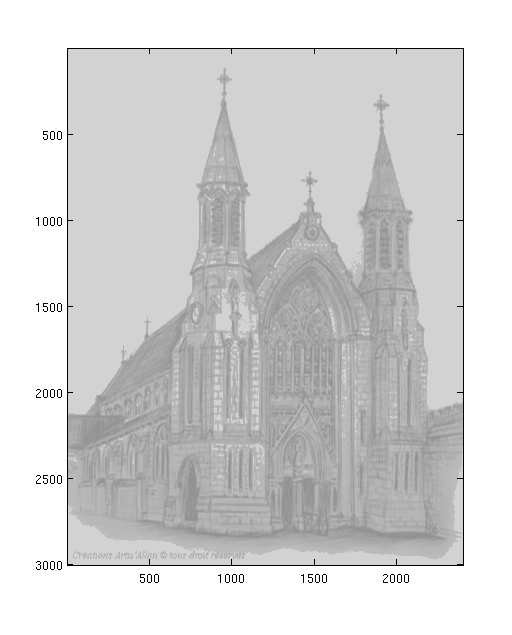}
%\caption{Mean of the square of the re-ordered coordinates $\theta_{(j)}^2$ of $\theta$ sampled according to priors $\Theta_1$, $\Theta_2$, and $\Theta_3$.} \label{fig:exp5}
\end{minipage}

\caption{First column = original image. Second column= reconstructed image. Third column: extremal point of the confidence set that minimises the contrast. The test of $3\% p$ sparsity is accepted for the first image but rejected for the other.} \label{fig:exp3}
\end{figure}
\end{center}

We consider now a more concrete setting, where we apply our method to images. We focus on black and white drawings\footnote{Note that a pre-treatment, like differentiation, can be applied to regular images to transform them into drawing-like images.}. The particularity of such images is that only a few pixels are non-zero, i.e.~if we align the columns of such an image, it can be considered as a sparse vector $\theta$ of dimension $p$ where $p$ is the number of lines times the number of columns of the image. Such an image $\theta$ can easily be compressed by conserving $n$ Fourier coefficient of this vector, chosen at random frequencies. We write $X$ the $n \times p$ matrix that represents this convolution. The model is then again $Y=X\theta + \epsilon$, where $\epsilon$ is some noise to the compression, due for instance to transmission. An interesting question when observing the compression $Y$ of an image is to infer what is the quality of the reconstruction, i.e.~if the image is indeed quite sparse or not. 

When one then observes such a compression, it is possible to reconstruct the image $\theta$ by, as we explained in Section~\ref{sec:est}. %An efficient and adaptive estimate on a such uncorrelated design is again hard thresholding, which is equivalent to lasso.
Also, the test and the confidence sets are build in the same way as what was done in Section~\ref{sec:test}.% The method of proof  for Fourier design is up to very little change similar to what we did for Rademacher design, and all the bounds hold in this case too.

We consider here an image with $p=7200000$ pixels. We consider $n=5\% p = 36000$ Fourier coefficients obtained by FFT (Fast Fourier Transform). We consider $2$ images that we display in Figure~\ref{fig:exp3} (first column), and write $\theta^{(k)}$ where $k \in \{1,2\}$. The first image is a black and white drawing of a cathedral, and the second one is the same drawing but with a background (a cloud): the first one will be approximately sparse and the second one not. It is possible to compress these images by considering, instead of $\theta^{(k)}$, the vector $Y^{(k)}$. The quality of the compression, i.e.~the proximity between the image reconstructed trough $Y^{(k)}$ and the image $\theta^{(k)}$, will depend however very much on the sparsity of the image, as we saw in Theorem~\ref{th:adaest}. We can use the results of Section~\ref{sec:test} to test whether the image is a least $3\%p$ approximately sparse or not, and then build confidence sets around it. Figure~\ref{fig:exp3} (two last columns, the first one containing the reconstructed images and the second one an extremal point of the confidence sets) illustrates this. More precisely, we display, for each image, the estimate of $\theta^{(k)}$ (i.e.~the reconstructed image), and an extreme points of the confidence sets that we choose as being the image that minimises the contrast.

Although image $1$ and $2$ are different images, their estimates are very close. The test reveals the fact that they are not the same, and that in particular the reconstruction of image $1$ will be good while the reconstruction of image $2$ will be bad (although they seem similar from their reconstruction). The confidence sets also show how much the true image could actually be different from the reconstructed image. In particular, the extremal point of the confidence set that minimises the contrast implies that although it is rather unlikely that there is a background in image $1$, image $2$ might well have one. For these images, the notion of approximate sparsity is very important since even the first image is not at all sparse (not even $40\% p$ sparse). It is however less than $3\% p$ approximately sparse. Because of the background, however, the second image is not even close to $3\% p$ approximately sparse.

\vspace{-0.3cm}

\paragraph{Conclusion} In this paper, we developed a computationally feasible, adaptive and honest confidence interval, first in the two sparsity indexes case, and then in the general setting of multi sparsity indexes. The method we propose is efficient on a maximal set (in a minimax-optimal sense), and is implementable, which is a novelty with respect to the existing results. The assumptions we make are also less restrictive than what was previously required.%The supplementary material contains an
 We also provided an experimental validation of this results by simulations, and also an application on images.%, as well as the proofs of the results.

%We display the proportion of errors made by the test in the $3$ cases (which will be respectively type one or type two errors), the inferred sparsity, the probability that the parameter $\theta$ is in the created confidence interval, 

% such that, with high probability, if we draw a sample according to 

% In the unusual situation where you want a paper to appear in the
% references without citing it in the main text, use \nocite
%\pagebreak

%\begin{thebibliography}
%\nocite{langley00}

%\end{document} 

%\nocite{langley00}

%\bibliographystyle{icml2014}

\newpage

\appendix
\onecolumn

{\huge Supplementary material for the paper: ``Implementable confidence sets in high dimensional regression''}

\section{General construction of the confidence sets}\label{app:expl}

\subsection{General construction of a confidence set in the two point case}

\paragraph{Construction of the confidence set}

Let $C,B,c, \sigma^2,\bar p>0$, and $\delta>0$. Let $S_0<S_1 <\bar p$. The algorithm for the two sparsity indexes, Algorithm~\ref{alg:1}, contains two main steps. The first step is to construct a test $\Psi_n$ for deciding whether $\theta$ is $S_0$ approximately sparse, or $S_1$ approximately sparse. The test consists in first computing an adaptive estimate $\hat \theta$ of $\theta$, and then on thresholding all non-significant components. Then, the testing decision is based on two factors (i) testing whether the number of non-zero entries of the thresholded estimate is larger than $S_0$ and (ii) testing whether the squared residuals $\|\hat r\|_2^2$ are significant or not. If both these quantities are small enough, the test is accepted, otherwise it is rejected. The outcome of this is the test $\Psi_n$. The second step is to use this information to construct the confidence set $C_n$. Based on $\Psi_n$, the confidence interval $C_n$ will be of diameter of order $\sqrt{\frac{S_0\log(p)}{n}}$ (if $\Psi_n = 0$), or $\sqrt{\frac{S_1\log(p)}{n}}$ (if $\Psi_n = 1$). The procedure is explained in Algorithm~\ref{alg:1}.

%We present now the main results of this paper, i.e.~the construction of a computationally feasible uniformly consistent test (in the sense of Equation~\eqref{def:uct}) for the testing problem described in Equation~\eqref{test:2}.

\begin{algorithm}[h]
\caption{Two sparsity indexes confidence set}
 \label{alg:1}
 \begin{algorithmic}
\STATE \textbf{Parameters:} $C,B,\bar p, \delta, \sigma^2, c_m, C_M$
\SET $\hat \theta = $ an adaptive estimate of $\theta$ as in Theorem~\ref{th:adaest} or~\ref{th:adaest2}, computed on the first half of the sample only
\STATE Consider from now on only the second half of the sample ($X,Y:=$ second half of the sample)
\SET the constants $\tau_n = 3\sqrt{\frac{C}{\min(c_m,1)} n^{-1/2}\log(1/\delta)}+ 2\sqrt{\frac{(C_M+1)}{\min(c_m,1)}E\frac{S_0\log(p/\delta)}{n}}$, and $\tau_n' = 2 \sqrt{\frac{E}{\min(c_m,1) }\frac{S_1\log(p/\delta)}{n}}$, where $E:=\Big(12(36\kappa + 36)^2 + C^2 \Big)$ if $\hat \theta$ is the estimate from Theorem~\ref{th:adaest}, otherwise $E$ as in Theorem~\ref{th:adaest2}
\SET the residual $$\hat r = Y - X \hat \theta$$
\SET the test statistic $$R_n = \|\hat r\|_2^2 - \sigma^2$$

\SET the test
\begin{equation}\label{eq:testS1gd}
\Psi_n = 1 -  \mathbf 1\{R_n \leq \tau_n^2\} \mathbf 1 \{  \sum_{j=S_0+1}^p \hat \theta_{(j)}^2 \leq (\tau_n')^2\},
\end{equation}
where $\hat \theta_{(.)}$ is the ordered version of $|\hat \theta|$

\SET the confidence interval
\begin{align*}
C_n &:= \big\{u \in \mathcal S_{S_1}: \|u - \hat \theta\|_2 \leq  \sqrt{E \frac{S_0 \log(p)}{n}} \mathbf 1\{\Psi_n = 0\}\\
&+ \sqrt{E \frac{S_1 \log(p)}{n}} \mathbf 1\{\Psi_n = 1\}\big\}
\end{align*}
\STATE \textbf{return} $C_n$
\end{algorithmic}
\end{algorithm}

\paragraph{Main result}

The following theorem states that this confidence interval is adaptive and honest (in the sense of Definition~\eqref{def:csetint}) for $\rho_n$ large enough.
\begin{theorem}\label{th:bigalter}
Let Assumptions~\ref{ass:noise} and~\ref{ass:design} be verified for $\bar p>S_1>S_0>0$, and for constants $c, c_m, C_M>0$. Let $\delta>0$ and $B>0$ and $C \geq 16C' (B^2 + c^2)$ where $C'$ is some universal constant. Assume that 
\begin{align*}
\rho_n &\geq \frac{3(C_M+1)}{\sqrt{\min(c_m,1)}}\\
&\times\min\Big(\sqrt{C\log(1/\delta)} n^{-1/4}, \sqrt{E}\sqrt{\frac{S_1 \log(p/\delta)}{n}}\Big).
%&+ \sqrt{\frac{2(C_M+1)}{\min(c_m,1)}}\sqrt{E}\mathbf{\sqrt{\frac{S_0 log(p/\delta)}{n}}}.
\end{align*}
if $S_0 \leq n^{1/2}\log(1/\delta)/\log(p/\delta)$, and $\rho_n = 0$ otherwise.
Set
$$\mathcal P:=\mathcal P_n(\rho_n) = \mathcal S_{S_0} (C,B, \bar p, \delta) \bigcup \tilde {\mathcal S}_{S_1} (C,B,\bar p, \delta,\rho_n).$$
The confidence set $C_n$ constructed in Algorithm~\ref{alg:1} is $\delta-$adaptive and honest for $\mathcal I = \{S_0 ,S_1\}$ and $\mathcal P$.
\end{theorem}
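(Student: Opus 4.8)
The plan is to argue on a good event where the sample-split estimate $\hat\theta$ (computed on the first half, hence independent of the second-half noise $\epsilon$) obeys the adaptive bound of Theorem~\ref{th:adaest} or~\ref{th:adaest2} at the true complexity level of $\theta$, and where the second-half noise concentrates. Concretely, fixing $\theta\in\mathcal P$ and letting $S\in\{S_0,S_1\}$ be its class, I would record three facts, each failing with probability $\lesssim\delta$: (a) $\|\hat\theta-\theta\|_2^2\le E\,S\log(p/\delta)/n$; (b) by the independence coming from sample splitting together with Assumption~\ref{ass:noise}, the residual statistic $R_n$ concentrates around $\frac1n\|X(\theta-\hat\theta)\|_2^2$, with the sub-exponential fluctuation of $\|\epsilon\|_2^2$ contributing the $n^{-1/2}\log(1/\delta)$ scale and the cross term $\langle X(\theta-\hat\theta),\epsilon\rangle$ the remainder; (c) via Assumption~\ref{ass:design}, $\frac1n\|X(\theta-\hat\theta)\|_2^2$ is pinched between $c_m^2\|\theta-\hat\theta\|_2^2$ and $C_M^2\|\theta-\hat\theta\|_2^2$ (legitimate since $\theta-\hat\theta$ is $O(\bar p)$-sparse), while $\|\hat\theta-l_0(S_0)\|_2$ differs from $\|\theta-l_0(S_0)\|_2$ by at most $\|\hat\theta-\theta\|_2$ because $u\mapsto\|u-l_0(S_0)\|_2$ is $1$-Lipschitz. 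The remainder of the argument runs deterministically on the intersection of these events, checking the two requirements of Definition~\ref{def:csetint}.

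For $\theta\in\mathcal S_{S_0}$ I would show the test accepts, $\Psi_n=0$. The thresholded statistic obeys $\|\hat\theta-l_0(S_0)\|_2\le\|\theta-l_0(S_0)\|_2+\|\hat\theta-\theta\|_2\le(\sqrt C+\sqrt E)\sqrt{S_0\log(p/\delta)/n}\le\tau_n'$ (since $\tau_n'\asymp\sqrt{S_1\log(p/\delta)/n}\ge\sqrt{S_0\log(p/\delta)/n}$), and the residual obeys $R_n\le C_M^2\|\hat\theta-\theta\|_2^2+\text{fluctuation}\le C_M^2E\,S_0\log(p/\delta)/n+O(n^{-1/2}\log(1/\delta))\le\tau_n^2$ by the calibration of $\tau_n$. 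Acceptance yields the small $S_0$-radius, which simultaneously delivers the required diameter and, by (a) at level $S_0$, covers $\theta$. This is the level (type-I) half, and it is precisely what forces the small diameter on the $S_0$-class.

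Coverage of the separated class $\theta\in\tilde{\mathcal S}_{S_1}(\cdot,\rho_n)$ is the heart of the matter, and I would prove it as a dichotomy rather than as pure power. If $\Psi_n=1$ the $S_1$-radius covers $\theta$ by (a) at level $S_1$, and the size bound is immediate as the radius never exceeds the $S_1$-level by construction. If $\Psi_n=0$, both statistics are small, and I must show this drives $\|\hat\theta-\theta\|_2^2$ down to the $S_0$-level so the $S_0$-radius still covers. The mechanism is that the separation energy $\|\theta-l_0(S_0)\|_2^2\ge\rho_n^2$ splits into a part carried by a few coordinates beyond the $S_0$-th, which would be seen by the thresholded statistic through (c) if it exceeded $\asymp S_1\log(p/\delta)/n$, and a part spread over sub-threshold coordinates that the lasso cannot capture and which therefore surfaces in $\frac1n\|X(\theta-\hat\theta)\|_2^2$ and hence in $R_n$. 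The constant $3(C_M+1)/\sqrt{\min(c_m,1)}$ and the two-term minimum in $\rho_n$ are exactly the calibration making this close: the $\sqrt{C\log(1/\delta)}\,n^{-1/4}$ term guarantees that spread-out separation energy produces a residual exceeding the $n^{-1/2}\log(1/\delta)$ noise fluctuation, while the $\sqrt E\sqrt{S_1\log(p/\delta)/n}$ term guarantees concentrated separation is detected by the thresholded statistic; taking the minimum is legitimate because the test rejects as soon as \emph{either} statistic exceeds its threshold. The regime split in the statement ($\rho_n=0$ once $S_0>n^{1/2}\log(1/\delta)/\log(p/\delta)$) is precisely where the residual fluctuation $n^{-1/2}\log(1/\delta)$ is dominated by $S_0\log(p/\delta)/n$, so that $\Psi_n=0$ already certifies $S_0$-level accuracy for \emph{every} $\theta\in\mathcal S_{S_1}$ and no margin is required.

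The main obstacle I anticipate is this last dichotomy: showing that the joint smallness of $R_n$ and of the thresholded statistic certifies $\|\hat\theta-\theta\|_2^2\lesssim S_0\log(p/\delta)/n$ for every separated $\theta$ that escapes rejection. The subtlety is that neither statistic alone controls the full error — the residual only registers the lasso-missed, spread component of $\theta-\hat\theta$, and the thresholded statistic only the concentrated excess mass — so the estimate must decompose $\hat\theta-\theta$ into these two parts and recombine the bounds while tracking how $c_m,C_M,C$ and $E$ propagate, so that the recombined bound lands below the $S_0$-radius with the stated $\rho_n$. Getting these constants to close (rather than merely the rates) is where the bulk of the bookkeeping will lie.
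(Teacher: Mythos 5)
Your architecture coincides with the paper's proof: sample splitting, a good event on which (a) the adaptive bound of Theorem~\ref{th:adaest} holds at the true class, (b) $R_n$ concentrates around $\frac{1}{n}\|X(\theta-\hat\theta)\|_2^2$ up to a fluctuation of order $n^{-1/2}\log(1/\delta)$, and (c) the design pinching of Assumption~\ref{ass:design} together with the $1$-Lipschitz property of $u\mapsto\|u-l_0(S_0)\|_2$ hold; the type-I analysis and the two-pronged detection calibration (thresholded statistic against $\sqrt{S_1\log(p/\delta)/n}$-separation, residual against $n^{-1/4}$-separation, the minimum being usable because either detection route suffices) are exactly the paper's. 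The one step that would fail if pursued literally is the one you single out as the main obstacle: for separated $\theta$ in the regime $S_0\leq n^{1/2}\log(1/\delta)/\log(p/\delta)$ (cases (i)--(ii)), it is \emph{not} possible to show that $\Psi_n=0$ ``drives $\|\hat\theta-\theta\|_2^2$ down to the $S_0$-level.'' Joint smallness of the two statistics only certifies $\|\hat\theta-\theta\|_2^2\lesssim n^{-1/2}\log(1/\delta)+S_0\log(p/\delta)/n$, because the fluctuation of the squared noise norm imposes an irreducible resolution floor of order $n^{-1/2}\log(1/\delta)$ on $R_n$, and in these cases that floor strictly dominates the $S_0$-radius. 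What your ``mechanism'' paragraph actually establishes --- and what the paper proves as uniform consistency of the test, Equation~\eqref{def:uct} --- is that on the good event the branch $\{\Psi_n=0\}$ is \emph{empty} for separated $\theta$: if the thresholded statistic is below $(\tau_n')^2$, the Lipschitz property gives $\|\theta-\hat\theta\|_2\geq\rho_n-\tau_n'$, and the calibration of $\rho_n$ then forces $R_n\geq c_m\|\theta-\hat\theta\|_2^2-Cn^{-1/2}\log(1/\delta)$ above $\tau_n^2$, contradicting acceptance. So the correct conclusion of your dichotomy is ``rejection is forced,'' with coverage on the separated class coming entirely from the $\Psi_n=1$ branch; no decomposition of $\hat\theta-\theta$ into a part seen by $R_n$ and a part seen by the thresholded statistic is needed, and your premise that ``the residual only registers the lasso-missed, spread component'' is inaccurate --- by the pinching (c), $R_n$ registers all of $\|\theta-\hat\theta\|_2^2$ up to the noise floor.

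Your certification idea does work in exactly the regime where the theorem sets $\rho_n=0$, i.e.\ $S_0> n^{1/2}\log(1/\delta)/\log(p/\delta)$: there $n^{-1/2}\log(1/\delta)\lesssim S_0\log(p/\delta)/n$, acceptance genuinely certifies $S_0$-level accuracy for every $\theta\in\mathcal S_{S_1}(C,B,\bar p,\delta)$, and coverage follows from your dichotomy. This is a valid alternative to the paper's treatment of case (iii), which argues instead that any non-separated $\theta$ lies in $\mathcal S_{S_0}\big((\sqrt{C}+2\sqrt{E})^2,B,\bar p,\delta\big)$, so the adaptive estimator already attains the $S_0$-rate on it and coverage holds whatever the test decides. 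Either argument closes case (iii); with your route the constants in $\tau_n$, $\tau_n'$ and in the radius of Algorithm~\ref{alg:1} must be matched carefully, which is indeed where the bookkeeping lies.
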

The proof of this theorem is in the supplementary material (Appendix~\ref{app:2}).

\paragraph{Discussion}

\textit{Comparison with results in paper~\cite{nickl2012confidence}.} The results we obtain hold uniformly on any vector of $S_{S_0}(C,B, \bar p, \delta)$, and $\tilde {\mathcal S}_{S_1} (C,B, \bar p, \delta, \rho_n)$ for $\rho_n$ as in Theorem~\ref{cor:conf}. Let $l_2(B)$ be the $l_2$ ball of radius $B$. It holds by definition of these sets that
\begin{align*}
l_0(S_0) \cap l_2(B) &\subset \mathcal S_{S_0}(C,B, \bar p,\delta),\\
%\hspace{4mm}
%\end{equation*}
&\mathrm{and} \\
%\begin{equation*}
\tilde {l_0}(S_1, (1+C)\rho_n) \cap &l_2(B) \subset \tilde {\mathcal S}_{S_1} (C,B, \bar p, \delta, \rho_n).
\end{align*}
In particular, our results imply all the upper bounds of~\cite{nickl2012confidence} in all cases (i), (ii) and (iii) of Table~\ref{fig:1}, i.e.~imply the upper bounds on exactly sparse sets (this is illustrated in Theorem~\ref{cor:conf}). %This comes from the fact that the bounds we prove hold on the model 
%$$\tilde {\mathcal S}_{S_1} (C,B, \bar p, \delta, \rho_n) \cup  \mathcal S_{S_0}(C,B, \bar p,\delta),$$
%which is strictly larger than 
%$$\Big(l_0(S_0) \cap l_2(B)\Big)  \cup \Big(\tilde {l_0}(S_1, (1+C)\rho_n) \cap l_2(B)\Big),$$
%i.e.~the model in~\cite{nickl2012confidence} which is strictly larger (it contains approximately sparse vectors).
Also, our confidence set is adaptive and honest in all three cases (i), (ii) and (iii), and we do not need to change the construction method as in paper~\cite{nickl2012confidence}. Our assumptions on the design of $X$ are weaker than in the paper~\cite{nickl2012confidence}, where the authors consider Gaussian design which a fortiori satisfies Assumption~\ref{ass:design} with high probability. Finally, the confidence set is, as we saw, computationally feasible, since its computational complexity is of order $np$. As mentioned in the introduction, the procedure in the paper~\cite{nickl2012confidence} boils down to minimizing over the set of $S_0-$ sparse vectors a quadratic quantity, which has complexity of order $p^{S_0}n$. This implies that our procedure is computationally efficient on a set that is as large as possible in a minimax sense, as illustrated by the lower bounds in Figure~\ref{fig:1}.%However, an important remark is that the proofs in paper~\cite{nickl2012confidence} hold for more general assumptions on the design matrix $X$ (i.e.~correlated entries). We believe however that our results can be extended to more general designs, trivially for any matrix $X$ with centred sub-Gaussian entries , or for matrices with correlated entries, but where one has an upper bound on the condition number of the covariance matrix, and also for RIP matrices.%Although we provide these results under some restrictive assumptions on the design matrix $X$ for the sake of simplicity, we believe that transposing them to more general settings (e.g.~$X$ RIP) is not difficult

\textit{The parameters of the algorithm.} The definition of the test $\Psi_n$ uses the knowledge of the RIP constants of the matrix $X$, the input of the parameters $C,B,\bar p, \delta$ of the enlarged sets, and the knowledge of the variance $\sigma^2$ of the noise. The dependency on $\sigma^2$ is undesirable, but there is a way around it by dividing the second half of the dataset in $2$ parts, and using the first part to estimate $\sigma^2$, and the second part to construct the confidence set (plugging in the estimate of $\sigma^2$ instead of the true value). This is explained in Corollary~\ref{cor:conf}. The parameters $B,C, \bar p, \delta$ of the enlarged sets are rather given to choose to the user for flexibility. The parameter $\delta$ is the desired coverage of the confidence set, the parameters $C, \bar p$ calibrate the desired level of approximated sparsity, the parameter $B$ is an upper bound on the $l_2$ norm of $\theta$. It is reasonable that the user fixes $\delta$, as well as the parameters $C, \bar p$. For instance, taking any fixed constant $C\geq 0$ and setting $\bar p =0$ will already allow to deal with all parameters in $l_0(S)$, see Corollary~\ref{cor:conf} where we choose $C$ large enough ($C=32$) so that the assumptions of Theorem~\ref{th:bigalter} are satisfied, and where we set $\bar p = 0$ in the second part of the corollary. The parameter $B^2$, i.e.~an upper bound on $\|\theta\|_2^2$, can be estimated with high enough precision by $(\min(1,c_m))^{-1}\Big(n^{-1}\sum_{i \leq n} Y_i^2 (1+2\log(1/\delta)) + 2\log(1/\delta)\Big)$, if the noise is either bounded by $1$, or is a Gaussian of variance $1$. The RIP constants of the matrix $X$, namely $c_m$ and $C_M$ are not computable in finite time. However, it is not necessary for the good functioning of the algorithm to have an exact knowledge of these constants. An upper bound on $C_M$ and a lower bound on $c_m$ suffice, and this will only damage the performance of the algorithm by a constant factor with respect to what could be achievable if $c_m, C_M$ were known precisely. Moreover, if $c_m$ is to small, and $C_M$ is too large, it means that the design is very correlated, and in this case, all known computable estimates fail. The lasso, for instance, works only for a quite restricted range of values for the parameters $c_m$ and $C_M$ (in this paper, we reprove the lasso consistency in the enlarged sets, and obtain that it functions correctly for $c_m \geq 2/3$ and $C_M \leq 4/3$, see Theorem~\ref{th:adaest}, this can be slightly improved, but not by much). Taking that into account, it is hopeless to hope for a confidence set that will be at the same time computable and adaptive and honest for all values of $c_m, C_M$: indeed, as pointed up above, any point of an adaptive and honest confidence set is actually a minimax adaptive estimator. In Corollary~\ref{cor:conf}, we precise all values of the parameters for the construction of the confidence set, under the assumption that the lasso estimate is efficient (under the assumptions of Theorem~\ref{th:adaest}): in this case the values of the parameters are fully explicit and the confidence set can be computed without a priori knowledge on the problem.

\subsection{General construction for multiple sparsities}

\paragraph{Construction of the confidence set}

In the last subsection, we restricted ourselves to constructing a confidence set that is adaptive to only two sparsities $S_0, S_1$. Although it is already useful with respect to existing techniques that are not adaptive at all, it is only a first step toward a more global result where many sparsity indexes $\mathcal I$ are considered (potentially, $\mathcal I$ can be the whole set of indexes $\{1, \ldots, \bar p\}$).

%A result on this can however be deduced from the two sparsity indexes confidence set. 
Let $\mathcal I = \{i_1, \ldots, \i_I\}$, where $i_1 \leq i_2 \leq \ldots \leq i_I$ be the set of sparsities over which one wishes to adapt. An intuitive extension of the two sparsities index confidence set to this more complex setting is presented in Algorithm~\ref{alg:2}. The idea is to compute the tests $\Psi_n$ for the adjacent sparsity indexes in $\mathcal I$, for smaller and smaller sparsities, until the sparsity index $t$ where the test is rejected ($i_{t-1}$ is rejected). The confidence band considered is then chosen of diameter of order $\sqrt{i_t\log(p)/n}$, and is centred in an adaptive estimate $\hat \theta$.

\begin{algorithm}[h!]
\caption{Multi sparsity indexes confidence set}
 \label{alg:2}
 \begin{algorithmic}
\STATE Set $\Psi_n := 0$, $t=I$
\WHILE{$\Psi_n =0$}
\STATE Set $\Psi_n$ and $C_n$ as the test and confidence interval outputted by Algorithm~\ref{alg:1} with parameters $S_0:=i_{t-1}, S_1:=i_t$
\ENDWHILE
\STATE \textbf{return} $C_n$
\end{algorithmic}
\end{algorithm}

\vspace{-0.3cm}

\paragraph{Main result}

This confidence set will be adaptive and honest for a model that respects the restriction of all sparsity indexes in $\mathcal I$.

\begin{theorem}\label{cor:bigalter}
Let Assumptions~\ref{ass:noise} and~\ref{ass:design} be satisfied for $\bar p>0$, and for constants $c, c_m,C_M>0$. Let $\delta>0$ and $B>0$ and $C \geq 16C' (B^2 + c^2)$ where $C'$ is some universal constant. Let, for any $2 \leq t \leq I$
\begin{align*}
\rho_n(t) &\geq \frac{3(C_M+1)}{\sqrt{\min(c_m,1)}}\\
&\times\min\Big(\sqrt{C\log(1/\delta)} n^{-1/4}, \sqrt{E}\sqrt{\frac{i_{t} \log(p/\delta)}{n}}\Big),
%&+ \sqrt{\frac{2(C_M+1)}{\min(c_m,1)}}\sqrt{E}\mathbf{\sqrt{\frac{S_0 log(p/\delta)}{n}}}.
\end{align*}
if $i_{t-1} \leq n^{1/2}\log(1/\delta)/\log(p/\delta)$, and $\rho_n(t) = 0$ otherwise. Set
$$\mathcal P:=\mathcal S_{i_1} (C,B, \bar p, \delta) \bigcup \bigcup_{t=2}^I \tilde {\mathcal S}_{i_t, i_{t-1}} (C,B,\bar p, \delta,\rho_n(t)).$$
The confidence set $C_n$ presented in Algorithm~\ref{alg:2} is $\delta-$adaptive and honest for $\mathcal I$ and $\mathcal P$.
\end{theorem}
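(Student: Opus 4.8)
The plan is to reduce the multi-sparsity statement to repeated application of the two-point result, Theorem~\ref{th:bigalter}, which has already been established. The key observation is that Algorithm~\ref{alg:2} is nothing more than a loop that calls Algorithm~\ref{alg:1} on the adjacent pairs $(i_{t-1}, i_t)$, starting from the largest index $t=I$ and decreasing $t$ until the first acceptance. So the whole argument will hinge on controlling the behaviour of each of the constituent two-point tests simultaneously, via a union bound over the $I-1$ possible pairs.

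First I would fix $\theta \in \mathcal P$ and identify which ``block'' it lives in: by definition of $\mathcal P$, either $\theta \in \mathcal S_{i_1}(C,B,\bar p,\delta)$, or there is a unique smallest $s\geq 2$ with $\theta \in \tilde{\mathcal S}_{i_s, i_{s-1}}(C,B,\bar p,\delta,\rho_n(s))$, meaning $\theta$ is $i_s$-approximately sparse but sits at distance at least $\rho_n(s)$ from $l_0(i_{s-1})$. The target is to show the loop stops at an index $t$ with $i_t$ comparable to the true approximate sparsity of $\theta$, and that the resulting ball of radius $\sqrt{E\, i_t \log(p/\delta)/n}$ centred at $\hat\theta$ both covers $\theta$ and has the right diameter. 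Second, I would invoke the two properties guaranteed by Theorem~\ref{th:bigalter} for each adjacent pair: honesty (correct coverage and diameter when the margin condition holds at level $\rho_n(t)$) and the correct test behaviour of $\Psi_n$. Concretely, for the pair $(i_{s-1}, i_s)$ the test should reject (output $\Psi_n=1$) with high probability because $\theta$ is $\rho_n(s)$-separated from $l_0(i_{s-1})$, whereas for every pair $(i_{t-1}, i_t)$ with $t<s$ the test should accept with high probability because $\theta$ is genuinely $i_{s-1}$-approximately sparse, hence a fortiori $i_{t}$-approximately sparse for $t\geq s-1$ — here I would use that the residual statistic $R_n$ and the tail statistic $R_n'$ are both small when $\theta$ is well approximated by an $i_t$-sparse vector. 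The loop therefore halts at $t=s$ (or at $t$ equal to the block index), delivering a ball of the intended radius.

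Third, I would assemble these per-pair guarantees into the two displayed conditions of Definition~\ref{def:csetint}. Coverage follows from the honesty half of Theorem~\ref{th:bigalter} applied at the stopping index, and the diameter bound follows because the halting index $t$ satisfies $i_t \leq i_s$ up to the adjacency of the grid, so $\sqrt{E\,i_t\log(p/\delta)/n}$ matches the required rate $\sqrt{E' S\log(p/\delta)/n}$ for the sparsity class containing $\theta$. Throughout I would keep a single global failure probability by allocating $\delta/(I-1)$ (or a rescaled $\delta$) to each of the $I-1$ invoked two-point tests and taking a union bound; since the thresholds $\tau_n, \tau_n'$ already carry $\log(p/\delta)$ factors, replacing $\delta$ by $\delta/I$ only inflates constants logarithmically and does not change the stated rates.

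The main obstacle I anticipate is handling the sequential, data-dependent nature of the stopping rule cleanly. Because the loop decides where to stop based on the outcomes of earlier tests, one cannot naively treat the chosen index $t$ as fixed; the care lies in showing that, on the high-probability event where \emph{all} the relevant adjacent two-point tests behave as Theorem~\ref{th:bigalter} predicts, the stopping index is deterministically pinned to the correct block, after which the two-point honesty guarantee transfers directly. Establishing that the test accepts for \emph{every} pair below the true block simultaneously — rather than for one fixed pair — is exactly what the union bound is for, and verifying that the monotone structure of approximate sparsity (an $i_{s-1}$-approximately sparse vector is $i_t$-approximately sparse for all $i_t \geq i_{s-1}$) makes all these acceptances consistent is the crux of the reduction.
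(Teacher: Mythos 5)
Your proposal follows essentially the same route as the paper: the paper's entire proof of this theorem is the single remark that it ``follows immediately from Theorem~\ref{th:bigalter}, and an union bound over all tests'', which is exactly your reduction — per-pair application of the two-point result, a union bound over the $I-1$ adjacent pairs (your rescaling of $\delta$ is in fact slightly more careful than the paper's), and the observation that on the intersection of the good events the data-dependent stopping index is deterministically pinned to the correct block via monotonicity of approximate sparsity. One directional slip to fix when writing it out: Algorithm~\ref{alg:2} loops while $\Psi_n=0$, i.e.~it stops at the first \emph{rejection}, not the first acceptance, so the tests that must simultaneously accept are those for pairs $(i_{t-1},i_t)$ with $t>s$ (where $i_{t-1}\geq i_s$, so that $\theta\in\mathcal S_{i_{t-1}}$ by monotonicity), while the test at $t=s$ rejects because of the $\rho_n(s)$-separation from $l_0(i_{s-1})$; your conditions ``$t<s$'' and ``$i_t$-approximately sparse for $t\geq s-1$'' have this direction reversed, although the monotonicity argument you invoke is precisely the right one once the inequalities are flipped.
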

This theorem follows immediately from Theorem~\ref{th:bigalter}, and an union bound over all tests.

%As mentioned in the introduction, such an estimate exists over exactly sparse vectors if 

%We propose two constructions that are adaptive and honest over $\{k_0, k_1\}$ depending on whether $k_1$ is large, or $k_1$ is small. The constructions will work whatever $k_1$, but the first one gives better results when $k_1$ is small, and the second one gives better results when $k_1$ is large.

%Let $\hat \theta$ be a sparsified, adaptive estimate of $\theta$ as mentioned in in Section~\ref{sec:est}, computed on another sample (so that $\hat \theta$ is independent of $Y$).

%Let us divide the dataset in two disjoint subsets $\mathcal D_1$ and $\mathcal D_2$ of $n/2$ observations each and let $C,B>0$. 
%On each of these two subsets $\mathcal D_k, k \in \{1,2\}$, do as follows. Start by dividing again sample set $\mathcal D_k$ in two disjoint subsets of size $n/4$ each.
%Let $\hat \theta^{(1)}$ be the adaptive estimates defined in Section~\ref{sec:est}, computed on the the first half of the sample set, i.e.~on $\mathcal D_1$. % and $\hat S_{\theta}$ on this first half in the same way as in the construction as the adaptive estimate.

\section{Proof for Theorem~\ref{th:adaest}}\label{app:12}

%\paragraph{Side result: Convergence of the Lasso under some restrictions on the constants $c_m, C_M$.}

\paragraph{Step 0: Compatibility condition}

We state the compatibility condition for $X$, or restricted eigenvalue condition (see~\cite{bickel2009simultaneous}).

\begin{assumption}\label{ass:compcond}
Let $\phi>0$ and $\bar p>0$. Let $U$ be any support of size $S \leq \bar p$ of $\{1...p\}$, $u$ be any vector, and $u_U$ be the restriction of $u$ to $U$. The $(\bar p, \phi)-$compatibility condition is satisfied if for any such $U, u$ such that $\|u_{U^C}\|_1 \leq 4 \|u_{U}\|_1$ where $U^C$ is the complement of $U$, it holds that
\begin{align*}
\|u_{U}\|_1^2 \leq \frac{S\|X u\|_2^2}{n\phi^2}.
\end{align*}
\end{assumption}

Note that this assumption is implied by the R.I.P. property with not too large R.I.P. constant for $33\bar p$ vectors, i.e.~$c_m>2/3$ and $C_M <4/3$ in Assumption~\ref{ass:design} with $33\bar p$. To deduce this, combine Assumption 2 in~\cite{bickel2009simultaneous}, with $m = 32 \bar p$ and $c_m>2/3$ and $C_M <4/3$, with the definition of $RE(\bar p,m,4)$ and Lemma 4.1 in~\cite{bickel2009simultaneous}, to obtain that this RIP property implies this compatibility condition with $\phi^2 = 1/6$. See also~\cite{zou2006adaptive}.

Since in the assumptions of Theorem~\ref{app:12}, we assume Assumption~\ref{ass:design} with $66\bar p$, $c_m>2/3$ and $C_M <4/3$, we know from the previous remark that Assumption~\ref{ass:compcond} holds with $2\bar p$ and $\phi^2 = 1/6$ .

\paragraph{Step 1: Decomposition of the problem}

Let $\theta \in \mathcal S_S(C,B,\bar p,\delta)$. Let $\theta = \theta_1 + \theta_2$, where $\theta_1$ contains the $S-$largest components of $\theta$, and $\theta_2$ the rest. We have for any vector $u \in \mathbb R^p$
\begin{align}
\|Y - Xu\|_2^2  &= \|\epsilon + X \theta_2\|_2^2 + 2\langle \epsilon + X\theta_2, X(\theta_1 - u) \rangle +\|X (\theta_1 - u)\|_2^2 \nonumber\\ 
&= \upsilon^2  +\|X (\theta_1 - u)\|_2^2 + 2\langle \epsilon, X(\theta_1 - u) \rangle + 2\langle X\theta_2, X(\theta_1 - u) \rangle, \label{eq:acici1}
\end{align}
where $\upsilon^2 = \|\epsilon + X \theta_2\|_2^2$ which does not depend on $u$.

Note first that
\begin{align}
|\langle  X\theta_2, X(\theta_1 - u) \rangle| &\leq \|X\theta_2\|_2 \|X(\theta_1 - u) \|_2 \nonumber\\
&\leq C_M\sqrt{n}\|\theta_2\|_2 \|X(\theta_1 - u) \|_2 \nonumber\\
&\leq  C_M\sqrt{C S\log(p/\delta)}\|X(\theta_1 - u) \|_2, \label{eq:acici2}
\end{align}
by definition of $X$, since $\theta  \in \mathcal S_S(C,B,\bar p,\delta)$ and is thus $\bar p$ sparse, and since $\theta_2$ contains all entries of $\theta \in \mathcal S_S(C,B,\bar p,\delta)$ that are smaller than the $S$ largest (and thus, $\|\theta_2\|_2 \leq \sqrt{\frac{C S\log(p/\delta)}{n}}$).

We have since $\epsilon$ is a centred sub-Gaussian process, and by H\"older's inequality and an union bound, that with probability larger than $1-\delta$ 
\begin{align}\label{eq:acici3}
|\langle  \epsilon, X(\theta_1 - u) \rangle| \leq \|X^T \epsilon\|_{\infty} \|(\theta_1 - u) \|_1 \leq Dc \sqrt{\log(p/\delta)n} \|\theta_1 - u \|_1,
%&\leq C_M \sqrt{\frac{CS\log(p/\delta}{n}}\|X(\theta_1 - \hat \theta) \|_2,
\end{align}
where $D$ is some universal constant. It holds since for any $j \leq p$, with probability larger than $1-\delta$ , $\langle X_{.,j}, \epsilon \rangle \leq Dc \sqrt{\log(1/\delta)n}$ by H\"older's inequality (since $\epsilon$ sub-Gausssian). Then an union bound on all $j\leq p$ leads to the bound~\eqref{eq:acici3}.

 Equations~\eqref{eq:acici1} to~\eqref{eq:acici3} imply that with probability larger than $1-\delta$ 
\begin{align}
\|Y - Xu\|_2^2 &\geq  \upsilon^2 + \|X (\theta_1 - u)\|_2^2 - Dc \sqrt{\log(p/\delta)n} \|\theta_1 - u \|_1 -  C_M\sqrt{C S\log(p/\delta)}\|X(\theta_1 - u) \|_2.  \label{eq:acici6}
\end{align}

\paragraph{Step 2: Lasso estimator}

Define now the estimator $\hat \theta$ as being
\begin{align*}
\hat \theta &= \arg\min_u \Big[ \|Y - Xu\|_2^2 +  \kappa \sqrt{\log(p/\delta)n}\|u\|_1\Big],
\end{align*}
where $\kappa> \max(2A, \frac{4\phi}{3}A^2)$ where $A:=\max(Dc, C_M\sqrt{C})$.

By definition of $\hat \theta$, we have if we compare it to the point $u:=\theta_1$ that
\begin{align*}
 \|Y - X\hat \theta\|_2^2 +  \kappa \sqrt{\log(p/\delta)n} \|\hat \theta\|_1 \leq  \|Y - X \theta_1\|_2^2 +  \kappa \sqrt{\log(p/\delta)n} \|\theta_1\|_1.
\end{align*}

From Equation~\eqref{eq:acici6} and Equation~\eqref{eq:acici1}, this implies
\begin{align*}
&\upsilon^2 + \|X (\theta_1 - \hat \theta)\|_2^2 - Dc \sqrt{\log(p/\delta)n} \|\theta_1 - \hat \theta \|_1 - C_M\sqrt{CS\log(p/\delta)}\|X(\theta_1 - \hat \theta) \|_2  +  \kappa \sqrt{\log(p/\delta)n} \|\hat \theta\|_1\\
&\leq   \upsilon^2  +  \kappa \sqrt{\log(p/\delta)n} \|\theta_1\|_1,
\end{align*}
which is equivalent to
\begin{align*}
&\|X (\theta_1 - \hat \theta)\|_2^2   +  \kappa \sqrt{\log(p/\delta)n} \|\hat \theta\|_1\\
&\leq   Dc \sqrt{\log(p/\delta)n} \|\theta_1 - \hat \theta \|_1 + C_M\sqrt{CS\log(p/\delta)}\|X(\theta_1 - \hat \theta) \|_2 +  \kappa \sqrt{\log(p/\delta)n} \|\theta_1\|_1.
\end{align*}
This implies since $A = \max(Dc, C_M\sqrt{C }) $, since $\theta_1$ is $S-$sparse, that with probability $1-\delta$
\begin{align*}
 &\|X (\theta_1 - \hat \theta)\|_2^2  + \kappa \sqrt{\log(p/\delta)n} (\|\hat \theta_{\Theta^C}\|_1 + \|\hat \theta_{\Theta}\|_1)\\ 
&\leq \kappa \sqrt{\log(p/\delta)n} \|(\theta_1)_{\Theta}\|_1 + A \sqrt{\log(p/\delta)} (\sqrt{n}\|(\theta_1)_{\Theta} - (\hat \theta)_{\Theta}\|_1 + \sqrt{n}\|\hat \theta_{\Theta^C}\|_1  + \sqrt{S}\|X(\theta_1 - \hat \theta)\|_2),
\end{align*}
where $\Theta$ is the support of $\theta_1$, $\Theta^C$ is its complement, and for any vector $u$, $u_{\Theta}$ is the restriction of $u$ to $\Theta$ (same applies to the complement). This implies since $\kappa >2 A$ that with probability $1-\delta$
\begin{align}
&\|X (\theta_1 - \hat \theta)\|_2 \Big( \|X(\theta_1 - \hat \theta)\|_2 - A \sqrt{\log(p/\delta)S} \Big)  + \kappa/2 \sqrt{\log(p/\delta)n} \|\hat \theta_{\Theta^C}\|_1 \nonumber\\ 
&\leq 3\kappa/2 \sqrt{\log(p/\delta)n} \|(\theta_1)_{\Theta} - \hat \theta_{\Theta}\|_1.\label{eq:coco2}
\end{align}

\paragraph{It holds with high probability that : $\|X (\theta_1 - \hat \theta)\|_2 \leq \frac{3\kappa}{\phi} \sqrt{\log(p/\delta)nS}$.} Assume that $\|X(\hat \theta - \theta_1)\|_2^2> 4A^2 S\log(p/\delta)$. In this case, the previous equation implies that with probability $1-\delta$
\begin{align}\label{eq:coco}
 \|X (\theta_1 - \hat \theta)\|_2^2/2  + \kappa/2 \sqrt{\log(p/\delta)n} \|\hat \theta_{\Theta^C}\|_1 \leq 3\kappa/2 \sqrt{\log(p/\delta)n} \|(\theta_1)_{\Theta} - \hat \theta_{\Theta}\|_1,
\end{align}
which implies in particular
\begin{align*}
\|(\theta_1)_{\Theta^C} - \hat \theta_{\Theta^C}\|_1 \leq 3 \|(\theta_1)_{\Theta} - \hat \theta_{\Theta}\|_1,
\end{align*}
since $(\theta_1)_{\Theta^C} = 0$. By the compatibility condition, we thus know that for this vector,
\begin{align*}
\|(\theta_1)_{\Theta} - \hat \theta_{\Theta}\|_1^2 \leq \frac{S\|X (\theta_1 - \hat \theta)\|_2^2}{n\phi^2},
\end{align*}
which implies that with Equation~\eqref{eq:coco} that
\begin{align*}
 \|X (\theta_1 - \hat \theta)\|_2 \leq \frac{\sqrt{3\kappa}}{\phi} \sqrt{\log(p/\delta)S},
\end{align*}
and so in every case, since $\frac{3\kappa}{\phi} >4A^2$, we have with probability $1-\delta$
\begin{align}\label{louba}
 \|X (\theta_1 - \hat \theta)\|_2 \leq \frac{\sqrt{3\kappa}}{\phi} \sqrt{\log(p/\delta)S}.
\end{align}
%and we are thus going to consider this case.

By combining this with Equation~\eqref{eq:coco2}, we obtain
\begin{align}\label{eq:coco3}
 \|X (\theta_1 - \hat \theta)\|_2^2   + \kappa/2 \sqrt{\log(p/\delta)n} \|\hat \theta_{\Theta^C}\|_1 \leq 3\kappa/2 \sqrt{\log(p/\delta)n} \|(\theta_1)_{\Theta} - \hat \theta_{\Theta}\|_1 + \frac{3\kappa A}{\phi} S\log(p/\delta).
\end{align}

\paragraph{Case 1: $\|(\theta_1)_{\Theta} - \hat \theta_{\Theta}\|_1\leq \frac{30A}{\phi} S\sqrt{\frac{\log(p/\delta)}{n}}$.} This condition implies with Equation~\eqref{eq:coco3} that
\begin{align*}
 &\|X (\theta_1 - \hat \theta)\|_2^2 + \kappa/2 \sqrt{\log(p/\delta)n} \|\theta_1 - \hat \theta\|_1\\ 
&=  \|X (\theta_1 - \hat \theta)\|_2^2 + \kappa/2  \sqrt{\log(p/\delta)n}\|(\theta_1)_{\Theta} - \hat \theta_{\Theta}\|_1 +  \kappa/2  \sqrt{\log(p/\delta)n}\|\hat \theta_{\Theta^C}\|_1 \\
&\leq \kappa/2  \sqrt{\log(p/\delta)n}\|(\theta_1)_{\Theta} - \hat \theta_{\Theta}\|_1\\
&+  3\kappa/2 \sqrt{\log(p/\delta)n} \|(\theta_1)_{\Theta} - \hat \theta_{\Theta}\|_1 + \frac{3\kappa A}{\phi} S\log(p/\delta) \\
&\leq 2\kappa  \sqrt{\log(p/\delta)n}\|(\theta_1)_{\Theta} - \hat \theta_{\Theta}\|_1  + \frac{3\kappa A}{\phi} S\log(p/\delta)\\
&\leq  \frac{6\kappa A}{\phi} S\log(p/\delta).
\end{align*}

%This concludes the proof using the restricted eigenvalue condition as in~\citep{tsyb}, Theorem 7.1.

\paragraph{Case 2: $\|(\theta_1)_{\Theta} - \hat \theta_{\Theta}\|_1\geq   \frac{30A}{\phi}  S\sqrt{\frac{\log(p/\delta)}{n}}$.} This condition implies with Equation~\eqref{eq:coco3} that
\begin{align*}
\|(\theta_1)_{\Theta^C} - \hat \theta_{\Theta^C}\|_1 \leq 4 \|(\theta_1)_{\Theta} - \hat \theta_{\Theta}\|_1,
\end{align*}
since $(\theta_1)_{\Theta^C} = 0$. By the compatibility condition, we thus know that for this vector,
\begin{align*}
\|(\theta_1)_{\Theta} - \hat \theta_{\Theta}\|_1^2 \leq \frac{S\|X (\theta_1 - \hat \theta)\|_2^2}{n\phi^2},
\end{align*}
which implies by Equation~\eqref{louba} that
\begin{align*}
\|(\theta_1)_{\Theta} - \hat \theta_{\Theta}\|_1 \leq \frac{3\kappa}{\phi^2} S \sqrt{\frac{\log(p/\delta)}{n}},
\end{align*}
and which implies together with Equation~\eqref{louba} that with probability $1-\delta$,
\begin{align*}
 \|X (\theta_1 - \hat \theta)\|_2^2 + \kappa/2 \sqrt{\log(p/\delta)n} \|\theta_1 - \hat \theta\|_1 \leq  \Big(\frac{3\kappa^2}{\phi^2} +  \frac{3\kappa}{\phi} \Big)S\log(p/\delta).
\end{align*}
%This concludes the proof using the restricted eigenvalue condition as in~\citep{tsyb}, Theorem 7.1.

\paragraph{Conclusion.} From the two above cases, we know that with probability $1-\delta$,
\begin{align*}
 \|X (\theta_1 - \hat \theta)\|_2^2 + \kappa/2 \sqrt{\log(p/\delta)n} \|\theta_1 - \hat \theta\|_1 \leq  \Big(\frac{3\kappa^2}{\phi^2} +  \frac{6\kappa}{\phi}(A+1) \Big)S\log(p/\delta).
\end{align*}
This implies that
\begin{align}\label{eq:beuh}
 \|X (\theta_1 - \hat \theta)\|_2^2  \leq  \Big(\frac{3\kappa^2}{\phi^2} +  \frac{6\kappa}{\phi}(A+1) \Big)S\log(p/\delta),
\end{align}
and 
\begin{align}\label{eq:boulni}
\|\theta_1 - \hat \theta\|_1  \leq  \Big(\frac{6\kappa}{\phi^2} +  \frac{12(A+1)}{\phi} \Big)S \sqrt{\frac{\log(p/\delta)}{n}}.
\end{align}
Let $a = \theta_1 - \hat \theta$, and $a_{(.)}$ be the order version of $|a|$. Note that the last equation implies that $|a_{(j)}| \leq \frac{1}{j}\Big(\frac{6\kappa}{\phi^2} +  \frac{12(A+1)}{\phi} \Big) S \sqrt{\frac{\log(p/\delta)}{n}}$, and thus
\begin{align}
\sum_{j=S+1}^p a_{(j)}^2 &\leq \sum_{j=S+1}^p \frac{1}{j^2}\Bigg[\Big(\frac{6\kappa}{\phi^2} +  \frac{12(A+1)}{\phi} \Big) S \sqrt{\frac{\log(p/\delta)}{n}}\Bigg]^2 \nonumber\\
&\leq 12 \Big(\frac{6\kappa}{\phi^2} +  \frac{12(A+1)}{\phi} \Big)^2 S \frac{\log(p/\delta)}{n}. \label{cocoti}
\end{align}

Assume that $\hat \theta$ is such that 
\begin{align*}
\|(\theta_1)_{U} - (\hat \theta)_{U}\|_2^2 \leq  \Big(\frac{3\kappa^2}{\phi} +  6\kappa(A+1) \Big) \frac{S\log(p/\delta)}{n},
\end{align*}
where $U$ is the support of the $S$ largest entries of the vector $|\theta_1 - \hat \theta|$, i.e.
\begin{align*}
\sum_{j=1}^S a_{(i)}^2 \leq \Big(\frac{3\kappa^2}{\phi} +  6\kappa (A+1)\Big) \frac{S\log(p/\delta)}{n}.
\end{align*}
Combining this with Equation~\eqref{cocoti} implies
\begin{align*}
\sum_{j=1}^p a_{(i)}^2 \leq \Bigg[\Big(\frac{3\kappa^2}{\phi} +  6\kappa(A+1) \Big) + 12 \Big(\frac{6\kappa}{\phi^2} +  \frac{12}{\phi}(A+1) \Big)^2\Bigg] \frac{S\log(p/\delta)}{n},
\end{align*}
which concludes the proof since $\sum_{j=1}^p a_{(i)}^2 = \|\theta_1 - \hat \theta\|_2^2$

Assume now the contrary, i.e.~that $\hat \theta$ is such that 
\begin{align*}
\|(\theta_1)_{U} - (\hat \theta)_{U}\|_2^2 \geq  \Big(\frac{3\kappa^2}{\phi} +  6\kappa(A+1) \Big) \frac{S\log(p/\delta)}{n}.
\end{align*}
 From Equation~\eqref{eq:beuh}, this implies that the compatibility condition is not satisfied for $\hat \theta - \theta_1$, and we thus know that for $U$ since $U$ is a support of size $S$
\begin{align*}
\|(\theta_1)_{U} - \hat \theta_{U^C}\|_2^2  \geq  4 \|(\theta_1)_{U} - \hat \theta_{U}\|_2^2,
\end{align*}
which is equivalent to
\begin{align*}
\sum_{j=1}^p a_{(i)}^2 \leq \sum_{j=S+1}^p a_{(i)}^2.
\end{align*}
From Equation~\eqref{cocoti}, we have
\begin{align*}
\sum_{j=1}^p a_{(i)}^2 \leq \sum_{j=S+1}^p a_{(i)}^2 \leq 12 \Big(\frac{6\kappa}{\phi^2} +  \frac{12}{\phi}(A+1) \Big)^2 S \frac{\log(p/\delta)}{n},
\end{align*}
and this completes the proof by summing these two terms, using the fact that $\|\theta_2\|_2 \leq C\sqrt{\frac{S\log(p/\delta)}{n}}$, and that $\theta = \theta_1 + \theta_2$.

\section{Proof for Theorem~\ref{th:bigalter}}\label{app:2}

\subsection{A related testing problem} Set

\begin{align*}
\tilde \rho_n &\geq \frac{3}{\sqrt{\min(c_m,1)}}\\
&\times\min\Big(\sqrt{C\log(1/\delta)} n^{-1/4}, \sqrt{E}\sqrt{\frac{S_1 log(p/\delta)}{n}}\Big).
&+ \sqrt{\frac{2(C_M+1)}{\min(c_m,1)}}\sqrt{E}\sqrt{\frac{S_0 log(p/\delta)}{n}}.
\end{align*}

We cast the testing problem
\begin{align}\label{test:2}
H_0:  \theta \in  \mathcal S_{S_0}(C,B, \bar p, \delta) \hspace{2mm} vs. \hspace{2mm} H_1: \theta \in \mathcal{\tilde S}_{S_1}(C, B, \bar p, \delta, \tilde \rho_n).
\end{align}
%The objective is to find a test that is uniformly consistent for $\rho \equiv \rho_n$ that is as small as possible and that converges to $0$ when $n$ grows to $+\infty$.

Let $\theta \in \mathcal S_S(C,B,\bar p, \delta)$ for some $S>0$. By definition of $\hat \theta$ and Theorem~\ref{th:adaest} we know that with probability larger than $1-\delta$
\begin{align}
\| \hat \theta - \theta\|_2^2 &\leq  E \frac{ S\log(p/\delta)}{n}. \label{eq:weirdo}
\end{align}
%where $E$ depends on $B,C$.%, and also $\|\hat \theta\|_0 \leq S$.

Consider now the vector $(\hat r_i)_{i \in \mathcal D_2}$, where $\mathcal D_2$ is the second half of the data set. By definition, for $i \in \mathcal D_2$,
\begin{align}
(\hat r_i)^2 - \sigma^2 &= \left(\sum_{j=1}^p X_{i,j} (\theta_j - \hat \theta_j) + \epsilon_i\right)^2 \nonumber\\
&= \left(\sum_{j=1}^p X_{i,j} (\theta_j - \hat \theta_j)\right)^2 + 2\left(\sum_{j=1}^p X_{i,j} (\theta_j - \hat \theta_j)\right)\epsilon_i + (\epsilon_i^2 - \sigma^2). \label{eq:res0}
\end{align}
Note first that since the $\epsilon_i$ are independent sub-Gaussian random variables, the elements of $(\hat r_i)_{i \in \{1,...,n\}}$ are independent among each others.

The mean of the first elements in the decomposition in Equation~\eqref{eq:res0} of $\hat r_i$, namely $\frac{1}{n}\|X(\theta - \hat \theta)\|_2^2 = \frac{1}{n} \sum_{i=1}^n\left(\sum_{j=1}^p X_{i,j} (\theta_j - \hat \theta_j)\right)^2$, is such that (by Assumption~\ref{ass:design}, since $\theta - \hat \theta$ is $\bar p$ sparse, see proof of Theorem~\ref{th:adaest})
\begin{align}\label{eq:res1}
c_m \|\theta - \hat \theta\|_2^2 \leq \frac{1}{n}\|X(\theta - \hat \theta)\|_2^2 \leq C_M \|\theta - \hat \theta\|_2^2,
\end{align}
The second term in the decomposition, by remarking that $\Big(\frac{1}{n} \left(\sum_{j=1}^p X_{i,j} (\theta_j - \hat \theta_j^{(1)})\right)\epsilon_i\Big)_i$ is a centred sub-Gaussian, uncorrelated, random process with tail coefficient bounded by $\frac{1}{n}c \|X(\theta - \hat \theta)\|_2 \leq C_m c\|\theta - \hat \theta\|_2$, we obtain by Hoeffding's inequality that with probability at least $1-\delta$,
\begin{align}\label{eq:res2}
\Big|\frac{1}{n} \sum_{i =1}^n 2\left(\sum_{j=1}^p X_{i,j} (\theta_j - \hat \theta_j)\right)\epsilon_i\Big|\leq C' C_Mc \|\theta - \hat \theta\|_2 \sqrt{\frac{\log(1/\delta)}{n}}.
\end{align}
Finally, the last term of the decomposition being also a sub-Gaussian random variable with tail coefficient bounded by $c^2$, we have by Hoeffding's inequality, with probability at least $1-\delta$,
\begin{align}\label{eq:res3}
\Big|\frac{1}{n} \sum_{i =1}^n (\epsilon_i^2 - \sigma^2)\Big|\leq C'c \sqrt{\frac{\log(1/\delta)}{n}}.
\end{align}
Finally, by Equations~\eqref{eq:res0} to~\eqref{eq:res3}, by definition of $R_n$ and with probability larger than $1-4\delta$,
\begin{align}
 &c_m \|\theta - \hat \theta\|_2^2  - C'c (C_M\|\theta - \hat \theta\|_2 + 1) \sqrt{\frac{\log(1/\delta)}{n}}\nonumber\\
&\leq R_n \leq  C_M\|\theta - \hat \theta\|_2^2  + C'c (C_M\|\theta - \hat \theta\|_2 + 1) \sqrt{\frac{\log(1/\delta)}{n}}, \nonumber
\end{align}
which implies 
\begin{align}
 c_m \|\theta - \hat \theta\|_2^2  -C n^{-1/2}\log(1/\delta) \leq R_n \leq  C_M\|\theta - \hat \theta\|_2^2  + C n^{-1/2}\log(1/\delta),  \label{concRn}
\end{align}
since by definition $C \geq 16C' (C_M^2B^2 + c^2)$ and $\|\theta - \hat \theta\|_2^2 \leq 4 B^2$ as both vectors are in $l_2(B)$.

\textbf{Under the null hypothesis $H_0$.} Let $\theta \in \mathcal S_{S_0}(C,B,\bar p, \delta)$. Let us re-order the coefficients of $\theta$ as $\theta_{(1)}^2\geq \theta_{(2)}^2 \geq ...\geq \theta_{(p)}^2$. Equation~\eqref{eq:weirdo} imply that with probability larger than $1-\delta$,
\begin{align*}
\sum_{j=S_0+1}^p \hat \theta_{(j)}^2 \leq E\frac{S_0  \log(p/\delta)}{n} <(\tau_n')^2.
\end{align*}
Also Equations~\eqref{eq:weirdo} and~\eqref{concRn} imply that with probability at least $1-\delta$,
\begin{align*}
 R_n  \leq C n^{-1/2}\log(1/\delta) + C_M E\frac{S_0\log(p/\delta)}{n} \leq \tau_n^2,
\end{align*}
by definition of $\tau_n^2$. This implies that the test is accepted with probability larger than $1-\delta$.

\textbf{Under the alternative hypothesis $H_1$.} Assume now that $\theta \in \mathcal{\tilde S}_{S_1}( C,B,\bar p, \delta, \tilde  \rho_n)$. By definition of the set $\mathcal{\tilde S}_{S_1}(C,B,\bar p, \delta, \tilde \rho_n)$, we know that 
\begin{equation}\label{eq:cas2}
\tilde  \rho_n^2 \geq \|\theta - \mathcal S_{S_0}\|_2^2 \geq \sum_{j=S_0+1}^{p}\theta_{(k)}^2,
\end{equation}
where $\tilde  \rho_n =\frac{1}{\sqrt{\min(c_m,1)}} \Big( \min(3\sqrt{Cn^{-1/2}\log(1/\delta)}, 3\sqrt{E\frac{S_1\log(p/\delta)}{n}}) + 2\sqrt{E\frac{S_0\log(p/\delta)}{n}}\Big)$.

\emph{Case 1: $\tilde \rho_n^2 = \frac{1}{\sqrt{\min(c_m,1)}} \Big(3\sqrt{E\frac{S_1\log(p/\delta)}{n}} + 2\sqrt{E\frac{S_0\log(p/\delta)}{n}}\Big)$.} Equation~\eqref{eq:weirdo} imply that with probability larger than $1-\delta$,
\begin{align*}
&\sum_{j=S_0+1}^p \hat \theta_{(j)}^2 \geq  \frac{1}{\sqrt{\min(c_m,1)}} 3E\frac{S_1\log(p/\delta)}{n} -  E\frac{S_1 n \log(p/\delta)}{n}\\ 
&=  2E\frac{S_1\log(p/\delta)}{n}> 2E\frac{S_0\log(p/\delta)}{n} > (\tau_n')^2,
\end{align*}
since $\min(c_m,1)<1$. This implies that the test is rejected with probability larger than $1-\delta$.

\emph{Case 2: $\tilde \rho_n^2 = \frac{1}{\sqrt{\min(c_m,1)}} \Big(3\sqrt{C n^{-1/2}\log(1/\delta)} + 2\sqrt{E\frac{S_0\log(p/\delta)}{n}}\Big)$.} If $\sum_{j=S_0+1}^p \hat \theta_{(j)}^2 \geq  2E\frac{S_0\log(p/\delta)}{n} > (\tau_n')^2$, then the test is rejected. Assume now that
\begin{equation*}
\sum_{j=S_0+1}^p \hat \theta_{(j)}^2 \leq  2E\frac{S_0\log(p/\delta)}{n}.
\end{equation*}
This implies in particular, since $\sum_{j=S_0+1}^{p}\theta_{(k)}^2 > \sqrt{4E\frac{S_0\log(p/\delta)}{n}} + n^{-1/4}$ that
\begin{align*}
\|\theta - \hat \theta\|_2^2 &> |\frac{1}{\sqrt{\min(c_m,1)}} \Big(3\sqrt{Cn^{-1/2}} + 2\sqrt{E\frac{S_0\log(p/\delta)}{n}}\Big) - \sqrt{2E\frac{S_0\log(p/\delta)}{n}}|^2\\ 
&>  \frac{1}{\min(c_m,1)} |3\sqrt{Cn^{-1/2}}  + \frac{1}{2} \sqrt{E\frac{S_0\log(p/\delta)}{n}}|^2 |^2\\
&>\frac{1}{\min(c_m,1)} \Big(9Cn^{-1/2}  + \frac{1}{4} E\frac{S_0\log(p/\delta)}{n}\Big).
\end{align*}

Equation~\eqref{concRn} and the previous equation imply that with probability at least $1-4\delta$,
\begin{align*}
 R_n  \geq c_m\|\theta - \hat \theta\|_2^2 - C n^{-1/2} \geq 9Cn^{-1/2}  + \frac{1}{4} E\frac{S_0\log(p/\delta)}{n} - C n^{-1/2} \geq \tau_n^2,
\end{align*}
by definition of $\tau_n^2$. This implies that the test is accepted with probability larger than $1-\delta$.

\textbf{Conclusion} Finally, we have by combining the results under $H_1$ and $H_0$ that the test is $2\delta-$uniformly consistent for such a $\rho_n^2$, i.e.~it satisfies
\begin{equation}\label{def:uct}
\sup_{\theta \in \mathcal{S}_{S_0}(C,B, \bar p, \delta)} \mathbb E_{\theta} \Psi_n + \sup_{\theta \in \mathcal{\tilde S}_{S_1}(C,B, \bar p, \delta, \tilde \rho_n) } \mathbb E_{\theta} [1-\Psi_n] \leq 2\delta,
\end{equation}
by just summing the bounds on the probability of error under $H_0$ and $H_1$.

\subsection{The confidence set is honest and adaptive}

\textbf{The confidence set is adaptive and honest on $\mathcal P_n(\tilde \rho_n)$}

Let $\theta \in \mathcal P_n(\tilde \rho_n)$. Let $S=S_0$ if $\theta \in \mathcal S_{S_0}$, and $S=S_1$ otherwise. Let $\hat S = S_0$ is $\Psi_n = 0$ and $\hat S = S_1$ otherwise. We have
\begin{align*}
\mathbb P_{\theta}(\theta \in C_n) &= \mathbb P_{\theta}(\|\hat \theta - \theta\|_2^2 \leq E\frac{\hat S\log(p/\delta)}{n})\\
&\geq \mathbb P_{\theta}(\|\hat \theta - \theta\|_2^2 \leq E\frac{ S\log(p/\delta)}{n}, S = \hat S)\\
\geq 1 - \mathbb P_{\theta}(\|\hat \theta - \theta\|_2^2 \geq E\frac{ S\log(p/\delta)}{n}) - \mathbb P_{\theta}(S \neq \hat S)\\
\geq 1 - 3 \delta,
\end{align*}
by definition of $\hat \theta$ as an adaptive estimate, and by $\Psi_n$ is $2\delta-$uniformly consistent on $\mathcal P_n(\tilde \rho_n)$.
%In cases (i) and (ii) of Figure~\ref{fig:1}, we know that $\rho_n \geq \tilde \rho_n$. 

Moreover
\begin{align*}
\mathbb P_{\theta}(|C_n|_2^2 \leq E\frac{ S\log(p/\delta)}{n}) = \mathbb P_{\theta}(\hat S = S) \geq 1 - 2\delta,
\end{align*}
$\Psi_n$ is $2\delta-$uniformly consistent on $\mathcal P_n(\tilde \rho_n)$.

The confidence set is thus $3\delta-$adaptive and honest on $\mathcal P_n(\tilde \rho_n)$. This concludes the proof in cases (i) and (ii) of Figure~\ref{fig:1} since in these cases $\rho_n \geq \tilde \rho_n$.

\textbf{Case (iii) of Figure~\ref{fig:1} ($S_0\geq n^{1/2}/\log(p)$ and $\rho_n = 0$)}

Consider now $\theta \in \mathcal S_{S_1}(C,B,\bar p, \delta) \setminus \mathcal P_n(\tilde \rho_n)$. By definition,
$$\|\theta - \mathcal S_{S_0}\|_2^2\leq \tilde \rho_n \leq \sqrt{Cn^{-1/2}\log(1/\delta)}+ 2\sqrt{E\frac{S_0\log(p/\delta)}{n}}\leq (\sqrt{C}+2\sqrt{E})\sqrt{\frac{S_0\log(p/\delta)}{n}},$$
since $S_0 \geq n^{1/2}\log(1/\delta)/\log(p/\delta)$. This implies that $\theta \in \mathcal S_{S_0}((\sqrt{C}+2\sqrt{E})^2,B,\bar p, \delta)$, and we have since $\hat \theta$ is an adaptive estimate that with probability $1-\delta$
$$\|\hat \theta - \theta\|_2^2 \leq E\frac{ S\log(p/\delta)}{n}.$$
%where $E'$ is the equivalent of $E$, but for the parameter $((\sqrt{C}+2\sqrt{E})^2$ instead of $C$.

So in this case
\begin{align*}
\mathbb P_{\theta}(\theta \in C_n) &= \mathbb P_{\theta}(\|\hat \theta - \theta\|_2^2 \leq E\frac{\hat S\log(p/\delta)}{n})\\
&\geq \mathbb P_{\theta}(\|\hat \theta - \theta\|_2^2 \leq E\frac{ S_0\log(p/\delta)}{n}, S = \hat S)\\
&\geq 1-\delta.
\end{align*}
So the confidence interval is honest and adaptive for these parameters also, and thus it is honest and adaptive for the entire set $ \mathcal S_{S_1}(C,B,\bar p, \delta)$ (i.e.~$\rho_n =0$). This also concludes the proof in this case.

\section{Proof of existence of an estimate in the general case}\label{app:1}

\begin{theorem}\label{th:adaest2}
Let Assumptions~\ref{ass:noise}, and~\ref{ass:design} be satisfied for some $c, c_m>0, c_M, 2\bar p >0$. Let $B>0$ and $C>0$. Let $\delta>0$. There exists an estimate $\hat \theta$ of $\theta$ that is such that for some universal constants $D,G>0$ that depends only on $C,B,c, c_m, C_M$, we have
\begin{equation*}
\forall 0< S \leq  G \bar p, \sup_{\theta \in \mathcal S_S(C,B, \bar p, \delta) }\hspace{-1mm}\mathbb P_{\theta}\Big( \|\hat \theta - \theta\|_2^2 \geq \frac{E S  \log(p/\delta)}{n}\Big) \leq \delta.
\end{equation*}
%If moreover $c_m>2/3$ and $C_M <4/3$, then the solution of $l_1$ minimization or lasso (see Appendix~\ref{app:12}) satisfies the previous equation.
%If moreover the matrix $X$ satisfies the restricted eigenvalue condition, or compatibility condition (see Appendix~\ref{app:12} or \citep{bickel2009simultaneous}) for $2\bar p$ sparse vectors, then the solution of $l_1$ minimization or lasso (see Appendix~\ref{app:12}) satisfies also the previous equation where the constant $E$ depends also of the constant of the restricted eigenvalue condition. The restricted eigenvalue condition that is sufficient for this result is implied by $c_m>2/3$ and $C_M <4/3$ in Assumption~\ref{ass:design}.
\end{theorem}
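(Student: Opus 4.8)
The plan is to use the combinatorial $l_0$-penalized least squares estimator, which, unlike the Lasso of Theorem~\ref{th:adaest}, requires no upper restriction on the condition number and therefore works for arbitrary positive $c_m, C_M$. I would set
\[
\hat\theta \in \arg\min_{u \in \mathbb R^p}\Big[\|Y - Xu\|_2^2 + \kappa\,\sigma^2\|u\|_0\log(p/\delta)\Big]
\]
for a constant $\kappa$ to be fixed, and compare it against the oracle truncation $\theta_1$ obtained by keeping the $S$ largest coordinates of $\theta$, writing $\theta = \theta_1 + \theta_2$. Since $\theta \in \mathcal S_S(C,B,\bar p,\delta)$, the tail obeys $\|\theta_2\|_2^2 \le CS\log(p/\delta)/n$ by Definition~\ref{def:app:spa}, and $\theta_1$ is $S$-sparse with $S \le \bar p$.

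Next I would run the basic-inequality argument, exactly as in Step~2 of the proof of Theorem~\ref{th:adaest}. Writing $w := \theta_1 - \hat\theta$ and $Y = X\theta_1 + X\theta_2 + \epsilon$, comparing $\hat\theta$ with $\theta_1$ and cancelling the $u$-independent term gives
\[
\|Xw\|_2^2 + \kappa\sigma^2\|\hat\theta\|_0\log(p/\delta) \le 2|\langle \epsilon, Xw\rangle| + 2|\langle X\theta_2, Xw\rangle| + \kappa\sigma^2 S\log(p/\delta).
\]
The design cross-term is handled by Cauchy--Schwarz in $\mathbb R^n$ together with the upper bound of Assumption~\ref{ass:design} applied to the $\bar p$-sparse vector $\theta_2$: since $\|X\theta_2\|_2 \le C_M\sqrt n\,\|\theta_2\|_2 \le C_M\sqrt{C S\log(p/\delta)}$, an AM--GM split yields $2|\langle X\theta_2, Xw\rangle| \le \tfrac18\|Xw\|_2^2 + c_1 C_M^2 C\, S\log(p/\delta)$ for a universal $c_1$. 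The remaining, and crucial, term is the empirical process $\langle \epsilon, Xw\rangle$, where $w$ is supported on a \emph{random} set of cardinality $m := \|w\|_0 \le S + \|\hat\theta\|_0$.

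The main obstacle is the uniform control of this noise term over all possible supports of $\hat\theta$. I would establish, once and for all, that with probability at least $1-\delta$, every support $U$ satisfies $\|P_U\epsilon\|_2^2 \le c_2\,\sigma^2 |U|\log(p/\delta)$, where $P_U$ denotes the orthogonal projection onto the column span of $X_U$. This follows from a sub-Gaussian quadratic-form (Hanson--Wright) tail bound for $\|P_U\epsilon\|_2^2$ at a fixed $U$ of size $m$, a union bound over the $\binom{p}{m} \le p^m$ supports of each size, and a peeling over $m$; the factor $p^m$ is precisely absorbed by the target rate $m\log(p/\delta)$, which is exactly why the combinatorial search costs only a $\log p$ per coordinate. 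Since $Xw$ lies in the span of $X_{\mathrm{supp}(w)}$, writing $\langle\epsilon, Xw\rangle = \langle P_{\mathrm{supp}(w)}\epsilon, Xw\rangle$ and applying Cauchy--Schwarz then gives $2|\langle\epsilon,Xw\rangle| \le \tfrac18\|Xw\|_2^2 + c_3\,\sigma^2(S+\|\hat\theta\|_0)\log(p/\delta)$.

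Finally I would collect terms. Choosing $\kappa > c_3$ lets the penalty $\kappa\sigma^2\|\hat\theta\|_0\log(p/\delta)$ on the left absorb the $\|\hat\theta\|_0$ contribution of the noise bound, leaving $\tfrac34\|Xw\|_2^2 \le C''\, S\log(p/\delta)$ for a constant $C''$ depending only on $C, C_M, \sigma, \kappa$; the same inequality forces $\|\hat\theta\|_0 \le C''S/((\kappa-c_3)\sigma^2)$, so that $m = \|w\|_0$ is at most a fixed multiple of $S$. Imposing $S \le G\bar p$ with $G$ chosen (in terms of $C, B, c, c_m, C_M$) small enough that this multiple of $S$ stays below $2\bar p$ guarantees that $w$ is $2\bar p$-sparse --- this is exactly the role of the hypothesis $S \le G\bar p$ and of Assumption~\ref{ass:design} being posed at level $2\bar p$. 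The lower bound of Assumption~\ref{ass:design} then gives $c_m^2 n\|w\|_2^2 \le \|Xw\|_2^2$, whence $\|\theta_1 - \hat\theta\|_2^2 \le C'''\,S\log(p/\delta)/n$. Combining this with $\|\theta_2\|_2^2 \le CS\log(p/\delta)/n$ via $\|\theta - \hat\theta\|_2^2 \le 2\|\theta_1-\hat\theta\|_2^2 + 2\|\theta_2\|_2^2$ yields the claimed bound with $E = 2C''' + 2C$, and taking the supremum over $\theta \in \mathcal S_S(C,B,\bar p,\delta)$ completes the proof.
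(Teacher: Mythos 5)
Your proposal is correct, and it follows the paper's overall architecture quite closely: the same $l_0$-penalized least-squares estimator, the same decomposition $\theta = \theta_1 + \theta_2$ into the $S$ largest coordinates plus the tail, the same basic inequality against the oracle $\theta_1$, the same treatment of the cross term $\langle X\theta_2, X(\theta_1-\hat\theta)\rangle$ via the upper bound in Assumption~\ref{ass:design}, and the same endgame (bound $\|\hat\theta\|_0$ by a constant multiple of $S$, invoke the lower isometry bound at sparsity level $2\bar p$, and add back $\|\theta_2\|_2$). Where you genuinely diverge is in the key technical step: the uniform control of the noise term $\langle \epsilon, X(\theta_1-\hat\theta)\rangle$ over the random support of $\hat\theta$. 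The paper imports a chaining bound from the proof of Lemma 2 in~\cite{nickl2012confidence} (covering numbers of the set $\{u \in l_0(k): \|X(u-\theta_1)\|_2 \le b\}$, Dudley's entropy integral, and Borell--Sudakov--Tsirelson or Talagrand concentration), then extends it from a fixed radius $b$ to all radii by homogeneity and union-bounds over $k \le p$, obtaining $|\langle\epsilon, X(\theta_1-u)\rangle| \le D(\|X(\theta_1-u)\|_2+1)\sqrt{(k+S)\log(p/\delta)}$. You instead observe that $Xw$ lies in the column span of $X_{\mathrm{supp}(w)}$, control $\|P_U\epsilon\|_2^2$ simultaneously over all supports $U$ by Hanson--Wright plus a union bound over the at most $p^m$ supports of each size $m$, and conclude by Cauchy--Schwarz; this yields an equivalent multiplicative bound (in fact without the additive $+1$). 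Both routes give the same rate. Yours is more elementary and self-contained (no external lemma, no entropy integral), it makes transparent that the $\log p$ per coordinate is exactly the price of the union over supports, and it needs no property of $X$ whatsoever in the noise step, since a projection has rank at most $|U|$ for any design; the paper's chaining bound is the stronger form one would need for uniformity over entire sparse balls rather than only at the minimizer's support. One small bookkeeping point: the constant in your projection bound depends on the sub-Gaussian parameter $c$ of Assumption~\ref{ass:noise} rather than on $\sigma^2$ alone, so your $c_3$ (and hence the requirement $\kappa > c_3$) depends on $c$; this is harmless because the theorem's constants are allowed to depend on $c$.
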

\begin{proof}

\textbf{Step 1: Decomposition of the problem}

Let $\theta \in \mathcal S_S(C,B,\bar p,\delta)$. Let $\theta = \theta_1 + \theta_2$, where $\theta_1$ contains the $S-$largest components of $\theta$, and $\theta_2$ the rest. We have for any estimator $\hat \theta$
\begin{align}
\|Y - Xu\|_2^2  &= \|\epsilon + X \theta_2\|_2^2 + 2\langle \epsilon + X\theta_2, X(\theta_1 - u) \rangle +\|X (\theta_1 - u)\|_2^2 \nonumber\\
&= \upsilon^2 + 2\langle \epsilon, X(\theta_1 - u) \rangle +  2\langle X\theta_2, X(\theta_1 - u) \rangle  +\|X (\theta_1 - u)\|_2^2 ,\label{eq:cici1}
\end{align}
where $\upsilon^2 = \|\epsilon + X \theta_2\|_2^2$ does not depend on $u$.

Note first that
\begin{align}
|\langle  X\theta_2, X(\theta_1 - u) \rangle| &\leq \|X\theta_2\|_2 \|X(\theta_1 - u) \|_2 \nonumber\\
&\leq C_M\sqrt{n}\|\theta_2\|_2 \|X(\theta_1 - u) \|_2 \nonumber\\
&\leq  C_M\sqrt{C S\log(p/\delta)}\|X(\theta_1 - u) \|_2, \label{eq:cici2}
\end{align}
see Equation~\eqref{eq:acici2}.

Let $k>0$. In the proof of Lemma 2 in the paper~\cite{nickl2012confidence} (third equation on P 19), we have that for any $b>0$ there exists a constant $E>0$ such that for any $u>0$
\begin{align*}
\mathbb P\Big(\sup_{u: u \in l_0(k), \|X(u - \theta_1)\|_2 \leq b} |\langle  \epsilon, X(\theta_1 - u) \rangle| \geq Eb\sqrt{2k \log(p)} + b\sqrt{2u}\Big)\leq \exp(-u).
\end{align*}
The proof of this equation in the paper~\cite{nickl2012confidence} is based on the fact that the $e-$covering number of the set $\{u: u \in l_0(k), \|X(u - \theta_1)\|_2 = b\}$ is upper bounded by $\Big(p \big(\frac{2b+e}{e}\big)\Big)^{k+1}$ and then on techniques that feature mainly Dudley's entropy bound, and Borell Sudakov Cirelson concentration inequality (or also Talagrand's inequality). %We are now going to use a peeling argument in order to have the bound hold for any $b$, in the same vein as what is done in the paper~\cite{nickl2012confidence}

Last inequality implies that for any $\delta>0$, we have that on some event $\xi_{k,b}(\delta)$ of probability larger than $1-\delta$ (setting $u = \log(1/\delta)$)
\begin{align}\label{eq:booz}
\sup_{u: u \in l_0(k), \|X(u - \theta_1)\|_2 \leq b} |\langle  \epsilon, X(\theta_1 - u) \rangle| \leq Eb\sqrt{2k \log(p)} + b\sqrt{2\log(1/\delta)}.
\end{align}
Let now $v$ be such that $v \in l_0(k), \|X(v - \theta_1)\|_2 > b$. Let $c>1$ be such that $\|X(v - \theta_1)\|_2 = cb$. Since $v$ is $k$-sparse, and $\theta_1$ is $k-$sparse, there exists $u$ such that $u \in l_0(k+S)$, $u-\theta_1=  \frac{1}{c}(v - \theta_1)$ and $\|X(u-\theta_1)\|_2^2 = \frac{1}{c^2}\|X(v - \theta_1)\|_2^2=b^2$. Also by Equation~\eqref{eq:booz} (applied in $k+S$), we know that for this $u$, on the event $\xi_{k+S,b}(\delta)$ where Equation~\eqref{eq:booz} (applied in $k+S$) is satisfied
\begin{align*}
|\langle  \epsilon, X(u-\theta_1) \rangle| \leq Eb\sqrt{2(k+S) \log(p)} + b\sqrt{2\log(1/\delta)}.
\end{align*}
So since $u-\theta_1=  c(v - \theta_1)$, we have also that on the event $\xi_{k+S,b}(\delta)$
\begin{align*}
|\langle  \epsilon, X(v-\theta_1) \rangle| \leq Ecb\sqrt{2(k+S) \log(p)} + cb\sqrt{2\log(1/\delta)}.
\end{align*}
This implies on particular that on the event $\xi_{k+S,b}(\delta)$ of probability larger than $1-\delta$ 
\begin{align*}
\sup_{c\geq 1}\sup_{u: u \in l_0(k), \|X(u - \theta_1)\|_2 \leq cb} \frac{|\langle  \epsilon, X(\theta_1 - u) \rangle|}{c} \leq Eb\sqrt{2(k+S) \log(p)} + b\sqrt{2\log(1/\delta)},
\end{align*}
and this holds for any $b>0$ with $E$ that depends on $b$. Setting $b=1$ and considering the associated $E$, this implies that with probability larger than $1-\delta$ 
\begin{align*}
\sup_{b>1}\sup_{u: u \in l_0(k), \|X(u - \theta_1)\|_2 \leq b} \frac{|\langle  \epsilon, X(\theta_1 - u) \rangle|}{b} \leq C\sqrt{2(k+S) \log(p)} + \sqrt{2\log(1/\delta)},
\end{align*}
and by an union bound on all $k \leq p$, we get that with probability larger than $1-\delta$
\begin{align*}%\label{eq:cici3}
\sup_{b>1, k \leq p}\sup_{u: u \in l_0(k), \|X(u - \theta_1)\|_2 \leq b} \frac{|\langle  \epsilon, X(\theta_1 - u) \rangle|}{b\sqrt{k+S}} \leq C\sqrt{2\log(p)} + \sqrt{2\log(p/\delta)/(k+S)} \leq D\sqrt{\log(p/\delta)}.
\end{align*}
where $D$ is some universal constant. This implies that with probability larger than $1-\delta$, for any $u \in \mathbb R^p$, we have
\begin{align}\label{eq:cici3}
|\langle  \epsilon, X(\theta_1 - u) \rangle|     \leq  D (\|X(\theta_1 - u)\|_2 + 1) \sqrt{(k+S)\log(p/\delta)}.
\end{align}

Equations~\eqref{eq:cici1}, \eqref{eq:cici2} and~\eqref{eq:cici3} imply that with probability larger than $1-\delta$,% $|a_{\hat \theta}|\leq 2$, and such that
\begin{align}
\|Y - X \hat \theta\|_2^2  &\geq% \upsilon^2 +  \|X (\theta_1 - u)\|_2^2 + a_{\hat \theta}D \sqrt{(S+\|\hat \theta\|_0)\log(p/\delta)} \|X(\theta_1 - \hat \theta)\|_2\\
 \upsilon^2 +  \|X (\theta_1 - \hat \theta)\|_2^2  - 2 D (\|X(\theta_1 - u)\|_2 + 1) \sqrt{(k+S)\log(p/\delta)} \nonumber\\ 
&- 2C_M\sqrt{C S\log(p/\delta)}\|X(\theta_1 - u) \|_2\nonumber\\
&\geq \upsilon^2 +  \|X (\theta_1 - \hat \theta)\|_2^2  - A (\|X(\theta_1 - u)\|_2 + 1) \sqrt{(k+S)\log(p/\delta)},\label{eq:victo}
\end{align}
where $A:=2 (D + C_M\sqrt{C})$.

\textbf{Step 2: Definition of the estimator}

Define now the estimator $\hat \theta$ as being
\begin{align}
\hat \theta &= \arg\min_u \Big[ \|Y - Xu\|_2^2 + \kappa \log(p/\delta) \|u\|_0\Big],\label{min0}
\end{align}
where $\kappa>2A(F+1)$, where $F= \Big(A (\sqrt{2} + 4A + 2\sqrt{A} + 1) + 1\Big) $. Let $k$ be the sparsity of $\hat \theta$ from now on.

%\paragraph{Step 2: The solution of the minimization problem is at least $\bar p$ sparse.}

%For any $u$ such that $\|u\|_0 > \frac{3\log(1/\delta) n}{\kappa \log(p/\delta)}$, we have
%\begin{align*}
%\Big[\upsilon^2 +  \|X (\theta_1 - u)\|_2^2 + a_{u} \sqrt{\max(S,\|u\|_0)\log(p/\delta)} \|X(\theta_1 - u)\|_2\Big]_+  + \kappa \log(p/\delta) \|u\|_0  &>  3n\log(1/\delta),
%\end{align*}
%which is larger than the minimization problem for $u= \theta_1$. This implies that $\|\hat \theta\|_0 \leq \frac{3\log(1/\delta) n}{\kappa \log(p/\delta)} \leq \bar p$.

Since $\hat \theta$ is the minimizer of the above formula, it is in particular such that (with $u=\theta_1$)
\begin{align*}
\|Y - X \hat \theta\|_2^2 +   \kappa  \log(p/\delta) \|\hat \theta\|_0 &\leq  \|Y - X \theta_1\|_2^2 +   \kappa  \log(p/\delta) \| \theta_1\|_0,
\end{align*}
which implies together with Equations~\eqref{eq:cici1} and~\eqref{eq:victo}
\begin{align}
\|X (\theta_1 - \hat \theta)\|_2^2  - A (\|X(\theta_1 - u)\|_2 + 1) \sqrt{(k+S)\log(p/\delta)} +  \kappa   \log(p/\delta) k &\leq  \kappa   \log(p/\delta) \| \theta_1\|_0.\label{eq:victo2}
\end{align}

\textbf{Step 3: Proof that with high probability, $\|X(\theta_1 - \hat \theta)\|_2 \leq  (\sqrt{2} + 4A + 2\sqrt{A}) \sqrt{(k+S)\log(p/\delta)}$.}

Assume that $\hat \theta$ is such that $\|X(\theta_1 - \hat \theta)\|_2^2 >  2A (\|X(\theta_1 - u)\|_2 + 1) \sqrt{(k+S)\log(p/\delta)} $.

Equation~\eqref{eq:victo2} implies that
\begin{align*}
\|X (\theta_1 - \hat \theta)\|_2^2/2 +    \log(p/\delta) \|\hat \theta\|_0 &\leq    \log(p/\delta) \| \theta_1\|_0,
\end{align*}
which implies that $\|\hat \theta\|_0 \leq  \| \theta_1\|_0 \leq S$. By the previous equation this implies
\begin{align*}
\|X(\theta_1 - \hat \theta)\|_2 &\leq   \sqrt{2S \log(p/\delta)}.
\end{align*}

This implies that in any case (also when $\|X(\theta_1 - \hat \theta)\|_2^2 \leq  2A (\|X(\theta_1 - u)\|_2 + 1) \sqrt{(k+S)\log(p/\delta)} $)
\begin{align}\label{eq:victo3}
\|X(\theta_1 - \hat \theta)\|_2 &\leq  (\sqrt{2} + 4A + 2\sqrt{A} + 1) \sqrt{(k+S)\log(p/\delta)} = F\sqrt{(k+S)\log(p/\delta)}.
\end{align}

\textbf{Step 4: Bound on the sparsity $k$ of $\hat \theta$.}

From Equations~\eqref{eq:victo2} and~\eqref{eq:victo3}, we deduce that
\begin{align*}
 \kappa \log(p/\delta) (k-S) &\leq  A (F+1) (k+S)\log(p/\delta).
\end{align*}
Since $\kappa > 2 A (F+1)$, it implies that
\begin{align*}
k &\leq  \big(A(F+1) + \kappa \big)S := GS.
\end{align*}

\textbf{Step 4: Conclusion.}

Since $k \leq GS \leq \bar p$, Assumption~\ref{ass:design} applies to $\theta_1 - \hat \theta$
\begin{align*}
\|X(\theta_1 - \hat \theta)\|_2 &\geq  c_m\sqrt{n} \|\theta_1 - \hat \theta\|_2.
\end{align*}
This implies together with Equation~\eqref{eq:victo3} that with probability larger than $1-\delta$
\begin{align*}
\|\theta_1 - \hat \theta\|_2 &\leq  \frac{1}{c_m}(\sqrt{2} + 4A + 2\sqrt{A} + 1) \sqrt{\frac{S\log(p/\delta)}{n}}.
\end{align*}
This concludes the proof using the fact that $\|\theta_2\|_2 \leq C\sqrt{\frac{S\log(p/\delta)}{n}}$, and that $\theta = \theta_1 + \theta_2$.

\end{proof}

\end{document}